\title{A Variational View on Bootstrap Ensembles as Bayesian Inference}
\author{
  Dimitrios~Milios\\
  EURECOM\\
  Sophia Antipolis, France \\
  \texttt{dimitrios.milios@eurecom.fr} \\
  \And
  Pietro~Michiardi \\
  EURECOM \\
  Sophia Antipolis, France \\
  \texttt{pietro.michiardi@eurecom.fr} \\
  \And
  Maurizio~Filippone \\
  EURECOM \\
  Sophia Antipolis, France \\
  \texttt{maurizio.filippone@eurecom.fr} \\
}
\newtheorem{theorem}{Theorem}
\newtheorem{lemma}{Lemma}
\newenvironment{customthm}[1]
	{\innercustomthm}
	{\endinnercustomthm}
\DeclareMathOperator{\Hessian}{Hess}
\newcommand{\name}[1]{{\textsc{#1}}\xspace}
\newcommand{\mcmc}{\name{mcmc}}
\newcommand{\dnn}{\name{dnn}}
\newcommand{\dnns}{\textsc{dnn}s\xspace}
\newcommand{\relu}{{\textsc{r}}e\name{lu}}
\newcommand{\KL}{\name{kl}}
\newcommand{\kl}{\name{kl}}
\newcommand{\ML}{\name{ml}}
\newcommand{\MAP}{\name{map}}
\newcommand{\KDE}{\name{kde}}
\newcommand{\elbo}{\name{elbo}}
\newcommand{\mnll}{\name{mnll}}
\newcommand{\rmse}{\name{rmse}}
\newcommand{\mcd}{\name{mcd}}
\newcommand{\sghmc}{\name{sghmc}}
\newcommand{\gp}{\name{gp}}
\newcommand{\adagrad}{\name{adagrad}}
\newcommand{\lbfgs}{\name{l-bfgs}}
\newcommand{\norm}{\mathcal{N}}
\newcommand{\E}{\mathrm{E}}
\newcommand{\data}{\mathcal{D}}
\newcommand{\param}{\theta}
\newcommand{\wvec}{\mathbf{w}}
\begin{document}

\maketitle

\begin{abstract}
In this paper, we employ variational arguments to establish a connection between ensemble methods for Neural Networks and Bayesian inference. We consider an ensemble-based scheme where each model/particle corresponds to a perturbation of the data by means of parametric bootstrap and a perturbation of the prior. We derive conditions under which any optimization steps of the particles makes the associated distribution reduce its divergence to the posterior over model parameters. Such conditions do not require any particular form for the approximation and they are purely geometrical, giving insights on the behavior of the ensemble on a number of interesting models such as Neural Networks with ReLU activations. Experiments confirm that ensemble methods can be a valid alternative to approximate Bayesian inference; the theoretical developments in the paper seek to explain this behavior.
\end{abstract}

\section{Introduction}




Ensemble methods have a long history of successful use in machine learning to improve performance over individual models~\cite{DietterichMCS00}.
Recently, there has been a surge of interest in ensemble methods for Deep Neural Networks (\dnns)~\citep{Lakshminarayanan2017,Osband2018}. 
Contrary to early works on the topic, currents attempts use ensembles to characterize the uncertainty in predictions, rather than to improve performance~\citep{Hansen90}. 
Although the result of this practice is not conceptually too different from having a distribution of predictive models as in Bayesian inference,
to the best of our knowledge, ensemble-based methods lack the principled mathematical framework of Bayesian statistics.

Bayesian inference presents great computational challenges, as the evaluation of posterior distributions involves integrals that are often intractable~\citep{Bishop06}, and this is generally the case of models commonly employed in the literature, such as \dnns.
Bayesian approaches often resort to approximations either by means of sampling through Markov Chain Monte Carlo (\mcmc) \citep{Neal93,Neal96}, or by variational methods \citep{Jordan1999,Graves11}.  
The latter employ approximations of the posterior such that the Kullback-Leibler (\KL) divergence between the approximating and the target distributions is minimized.



The aim of this paper is to gain some insights on ensemble methods by viewing these through the lenses of variational inference.  
Ensemble-based methods rely on a wide range of practices, which are difficult to place under a single unified framework. 
In particular, ensemble methods include repeated training via random initialization~\citep{Szegedy2015,Sutskever2014,Krizhevsky2012}, random perturbation of the data~\citep{Lakshminarayanan2017}, or more recently, random perturbation of the prior~\citep{Osband2018,pearce2018_icml}.
In this work, we focus on parametric bootstrap \citep{Efron1993}, whereby 
many perturbed replicates of the original data are generated by introducing noise from a parametric distribution, and a new model is optimized for
each perturbed version of the original loss function.
Then, the ensemble of models, 
each member of which is represented by a ``particle'' in the parameter space, 
makes it possible to obtain a family of predictions on unseen data, which can be used to quantify uncertainty in a frequentist sense. 
In this paper we seek to investigate 
whether this ensemble has any connection with the ensemble of models obtained in Bayesian statistics when sampling parameters from their posterior distribution.

\begin{table*}[t]
\caption{
	Regression results: Average test \rmse and \mnll for ensembles of \dnns, Monte Carlo dropout (\mcd) and Stochastic-gradient Hamiltonian Monte Carlo (\sghmc). The \dnns used consist of 50 \relu nodes per layer.
} \label{tab:rmse_mnll}
\begin{center}
{\fontsize{6.7}{7.6} \selectfont
\begin{tabular}{l||cc|cc||cc|cc}
                                 &\multicolumn{4}{c||}{\textbf{TEST RMSE}}           
                                 &\multicolumn{4}{c}{\textbf{TEST MNLL}} \\
      &\multicolumn{2}{c|}{\textbf{Ensemble}}     &\textbf{MCD}     &\textbf{SGHMC}     
      &\multicolumn{2}{c|}{\textbf{Ensemble}}     &\textbf{MCD}     &\textbf{SGHMC}\\
\textbf{DATASET} &1-layer        &4-layer         &4-layer          &1-layer         
                 &1-layer        &4-layer         & 4-layer         &1-layer    \\ 
\hline \\
Boston (506)     &3.17$\pm$0.65   &3.17$\pm$0.60   &\textbf{2.93$\pm$0.27}    &3.55$\pm$0.57   
                 & 3.72$\pm$1.87  & 3.60$\pm$1.48  & \textbf{2.44$\pm$0.12}   & 3.40$\pm$0.87   \\
Concrete (1030)  &5.17$\pm$0.33   &5.47$\pm$0.75   &\textbf{4.74$\pm$0.34}    &6.17$\pm$0.40   
                 & 4.60$\pm$2.36  & 4.14$\pm$1.61  & \textbf{2.91$\pm$0.09}   & 5.20$\pm$1.06   \\
Energy (768)     &\textbf{0.45$\pm$0.04}   &0.65$\pm$0.38   &\textbf{0.45$\pm$0.04}    &0.46$\pm$0.04   
                 & 1.85$\pm$2.89  & 1.53$\pm$1.07  & \textbf{1.16$\pm$0.01}   & 1.19$\pm$1.04   \\
Kin8nm (8192)    &0.07$\pm$0.00   &\textbf{0.06$\pm$0.00}   &0.08$\pm$0.00    &0.08$\pm$0.00   
                 &-1.19$\pm$0.01  &\textbf{-1.30$\pm$0.01}  &-1.14$\pm$0.03   &-1.16$\pm$0.02   \\
Naval (11934)    &0.00$\pm$0.00   &0.00$\pm$0.00   &0.00$\pm$0.00    &0.00$\pm$0.00   
                 &\textbf{-5.61}$\pm$0.03  &-5.45$\pm$0.12  &-4.48$\pm$0.00   &-4.54$\pm$0.28   \\
Power (9568)     &4.12$\pm$0.11   &4.03$\pm$0.11   &\textbf{3.55$\pm$0.14}    &4.23$\pm$0.11   
                 & 2.88$\pm$0.03  & 2.98$\pm$0.06  & \textbf{2.63$\pm$0.03}   & 2.90$\pm$0.04   \\
Protein (45730)  &\textbf{1.89$\pm$0.03}   &\textbf{1.89$\pm$0.03}   &3.47$\pm$0.02    &1.96$\pm$0.03   
                 & \textbf{2.06$\pm$0.01}  &\textbf{2.06$\pm$0.01}   & 2.64$\pm$0.00   & 2.08$\pm$0.01   \\
Wine-red (1599)  &0.61$\pm$0.02   &0.61$\pm$0.02   &0.60$\pm$0.02    &0.62$\pm$0.02   
                 & \textbf{0.89$\pm$0.04}  & \textbf{0.87$\pm$0.04}  & \textbf{0.89$\pm$0.03}   & 0.94$\pm$0.04   \\
Yacht (308)      &0.75$\pm$0.23   &0.76$\pm$0.35   &1.49$\pm$0.28    &\textbf{0.57$\pm$0.20}   
                 & 1.12$\pm$0.34  & \textbf{0.96$\pm$0.33}  & 1.52$\pm$0.07   & 3.82$\pm$4.44   \\
\end{tabular}
}
\end{center}
\end{table*}

By introducing an appropriate prior-specific perturbation, in addition to the one due to the bootstrap,
it is possible to show that for linear regression with a Gaussian likelihood, the distribution of models obtained by bootstrap is {\em equivalent} to the one induced by the posterior distribution over model parameters (see, e.g., \citet{pearce2018_icml}).
This has sparked some interest in the literature of \dnns \citep{Osband2018,pearce2018_icml}, 
where quantification of uncertainty is highly desirable but the form of the posterior distribution is difficult to characterize. 
While scalable and flexible approximations for Bayesian \dnns exist \citep{Graves11,Rezende2015},
flexibility comes at the expense of increased complexity; simpler approximations cannot capture the intricacies of the actual posterior over model parameters \citep{Garipov2018}.
Nevertheless, an ensemble-based scheme may produce results that are competitive when compared to inference methods such as Monte Carlo dropout (\mcd) \cite{Gal16,gal2018w} 
and Stochastic-gradient Hamiltonian Monte Carlo (\sghmc) \cite{Chen14,BOHamiANN},
as seen in Table \ref{tab:rmse_mnll}.
The ensemble scheme that we consider is discussed in Section \ref{sec:methodology}, while a full account of the experimental setup can be found in Appendix \ref{sec:additional}.



In this paper, we aim to make a first step in the direction of connecting ensemble learning through bootstrap and Bayesian inference beyond linear regression.
We consider parametric bootstrap and the family of particles associated with the perturbed replicates of the data. 
We interpret the particles as samples from an unknown distribution approximating the posterior over model parameters, and we derive conditions under which any optimization steps of the particles improves the quality of the approximation to the posterior. 
Remarkably,
the conditions that we derive do not require assumptions on the distributional form assumed by the particles and they are purely {\em geometrical}, involving first and second derivative of the log-likelihood w.r.t. model parameters.
We make use of variational arguments to show that, in the linear regression case with a Gaussian likelihood, any optimization steps of the particles associated with a perturbed replicate of the data yields an improvement of the \KL divergence between the distribution of the particles and the posterior. 
Interestingly, \emph{the conditions that we derive suggest that this is also the case when the Hessian trace of the model function w.r.t.\ the parameters is zero almost everywhere.}
As a consequence, our result shows that applying parametric bootstrap on \dnns with \relu activations yields an optimization of the particles which 
does not degrade the quality of the approximation.


Turning Bayesian inference into optimization is very attractive, given that this could be massively parallelized while lifting the need to specify a flexible class of approximating distributions.
Experimental results, such as in Table \ref{tab:rmse_mnll}, confirm the potential of ensemble methods as an alternative to Bayesian approaches.
Crucially, and to the best of our knowledge, for the first time, we are able to gain insights as to why this is the case.


\section{Related work}



Ensemble methods are popular to boost the performance of \dnns \citep{Lee2015}.
Typically, diverse ensembles of networks are created by means of random initialization or randomly resampling dataset subsets.
Although ensembles have met significant empirical success \citep{Szegedy2015,Sutskever2014,Krizhevsky2012}, 
there is little discussion on potential connections with the Bayesian perspective.
One of the earliest attempts to bridge methodologically Bayesian inference and bootstrap can be attributed to \citet{Newton1994}.
More recently, \citet{Efron2012} proposed approximating Bayesian posteriors by reweighting parametric bootstrap replications.
In the area of \dnns, \citet{Heskes96} employed ensembles to evaluate uncertainty, where each member of the ensemble is trained on a non-parametric bootstrap replicate of the dataset.

A line of work worth mentioning in the area of particle-based approaches is Stein variational gradient descent \citep{SteinVI,SteinVI_2017}, which constitutes a deterministic algorithm to draw samples from a Bayesian posterior which relies on the kernelized Stein discrepancy \citep{Liu2016}.
Contrary to Stein-based approaches, particles in our work are considered to be random samples from a posterior approximation.

Bootstrapped ensembles have attracted attention in the recent years, as alternative to Bayesian inference.
In the work of \citet{Lakshminarayanan2017}, ensembles are used in combination with an adversarial training scheme.
The authors also suggest to use the negative log-likelihood as training criterion because it captures uncertainty.
In a few more recent works \citep{Osband2018,pearce2018_icml,pearce2018}, it is emphasized that bootstrapped ensembles 
of \dnns tend to converge to a solution that does not capture the uncertainty implied by the prior distribution.
\citet{Osband2018} address this issue by adding a different prior sample to a randomly initialized network and then optimize in order to obtain a member of the posterior ensemble.
In fact, our work is more related to the concept of \emph{anchored loss} introduced by \citet{pearce2018}, where each instance of an ensemble is ``anchored'' to a different sample of the prior.
In the case of a Gaussian posterior (i.e., Bayesian linear regression model) this process is proven to produce samples from the true posterior \citep{Osband2018,pearce2018_icml}.
To the best of our knowledge, there are no results in the literature regarding the case of Bayesian nonlinear regression models.

\section{Bootstrap ensembles}


In regression, observations $y$ are assumed to be a realization of a latent function $f(\mathbf{x}; \param)$ corrupted by a noise term $\epsilon$:
\begin{equation}
\label{eq:regression}
y = f(\mathbf{x}; \param) + \epsilon
\end{equation}
Given $n$ input-output training pairs $\data = \{(\mathbf{x}_i, y_i) \mid i=1\dots n\}$, the objective is to estimate $\param$. 
In Bayesian inference, this problem is formulated as a transformation of a prior belief $p(\param)$ into a posterior distribution by means of the likelihood function $p(\data|\param)$.
This is achieved by applying the Bayes rule:
$p(\param|\data) = \frac{p(\data|\param) \, p(\param)}{p(\data)}$,
where the evidence $p(\data)$ denotes the data probability when the model parameters are marginalized.
Except in cases where the prior and the likelihood function are conjugate, 
characterizing the posterior 
is analytically intractable \citep{Bishop06}.

Variational inference \citep{Jordan1999} recovers tractability by introducing an approximate distribution $q(\param)$ and minimizing the \KL divergence between $q(\param)$ and the posterior:
\begin{equation}
\kl[q||p] = \int_{\param} q(\param) \log \frac{q(\param)}{p(\param|\data)} d\param
\end{equation}
Equivalently, this problem is formulated as maximizing the Evidence Lower Bound (\elbo):
$\elbo[q] = \E_q [\log p(\data, \param)] - \E_q [\log q(\param)]$.
Contrary to the common practice of positing a parametric form for $q$ and optimizing the $\elbo$ with respect to its parameters~\citep{Graves11,Kingma14}, in this work, we consider $q$ to be an \emph{empirical} distribution, and we make no particular assumptions regarding its form.
In this case, a direct optimization of the \elbo presents numerical challenges due to the negative  entropy term $\E_q[\log q]$, as estimators for the empirical entropy are known to be biased \citep{Paninski2003}.
Our analysis of Section \ref{sec:analysis} allows us to reason about the gradient of the \KL divergence by gracefully avoiding the pitfalls of empirical entropies.



\paragraph{Ensemble of perturbed regression models}
\label{sec:methodology}
In a strictly frequentist setting, one can obtain a \emph{maximum a posteriori} estimate (\MAP) by maximizing: $\param_* = \arg\max_{\param} \left[ \log p(\data | \param) + \log p(\param)\right]$,
where $\log p(\param)$ is interpreted as a regularization term.
A typical frequentist strategy to generate statistics of point estimates is bootstrapping \citep{Efron1979}.
Data replicates are created by resampling the empirical distribution, and a different model is fitted to each replicate.
The ensemble of fitted models is used to calculate statistics of interest.
%
%
In parametric bootstrap \citep{Efron1993}, data replicates are created by sampling from a parametric distribution that is fitted to the data, typically by means of maximum likelihood (\ML).
In this work, we use the likelihood model as the resampling distribution, in order to reflect the assumptions of the Bayesian model.


\begin{minipage}{0.40\textwidth}
\begin{figure}[H]
	\includegraphics[width=\linewidth]{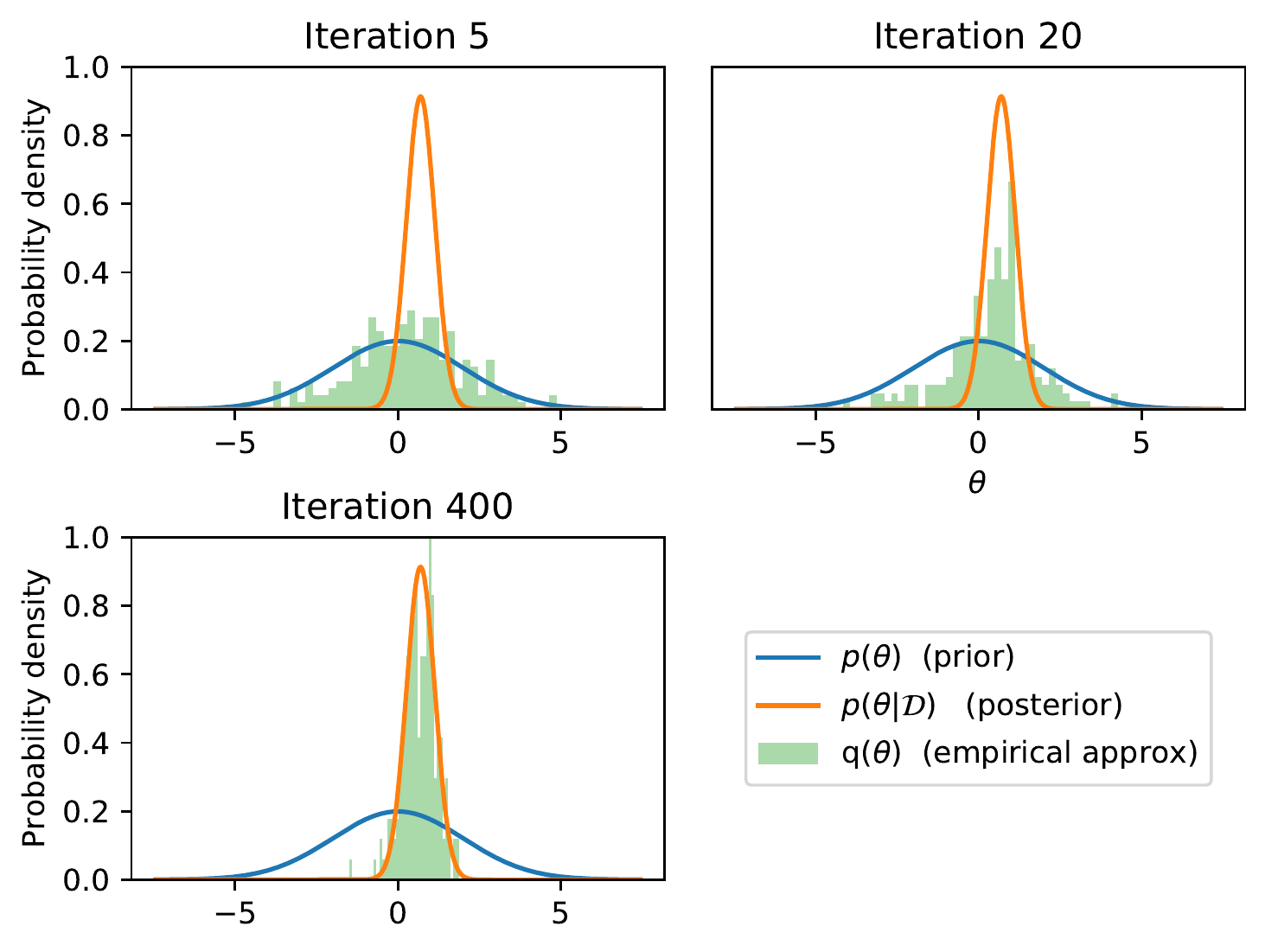}
	\vspace*{-20pt}
	\caption{Stages of optimizing particles for a univariate model.}
	\label{fig:univariate}
\end{figure}
\end{minipage}
\hspace{2pt}
\begin{minipage}{0.56\textwidth}
	\renewcommand{\algorithmicrequire}{\textbf{Input:}}
\renewcommand{\algorithmicensure}{\textbf{Output:}}
\begin{algorithm}[H]
	\caption{Ensemble of Perturbed Regression Models}
	\label{alg:the_algotithm}
	\begin{minipage}{0.48\textwidth}
	\end{minipage}
	{\footnotesize
	\begin{algorithmic}[1]
		\REQUIRE Joint log-likelihood $p(\data,\param)$, step size $h$
		\ENSURE A set of samples $\{\param^{(1)},\dots,\param^{(k)}\} \sim q$
		\FOR {$i \leftarrow 0$ {\bfseries to} $k$}
			\STATE Draw sample $\tilde{\data}$ from a likelihood-shaped \\ distribution with \ML parameter fitted on $\data$
			\STATE Draw sample $\tilde{\param}$ from the prior
			\STATE Initialize: $\param^{(i)} \leftarrow \tilde{\param}$
			\REPEAT 
				\STATE $\param^{(i)} \leftarrow \param^{(i)} + h \nabla \log \tilde{p}(\data, \param^{(i)})$
			\UNTIL {Convergence}
		\ENDFOR
	\end{algorithmic}
	}
\end{algorithm}

\end{minipage}

\vspace{5pt}
Consider a vectorization of model parameters $\param \in \mathbb{R}^m$.
For most of this work, we assume a Gaussian model for the noise, $\epsilon \sim \norm(0, \sigma^2)$, and a Gaussian prior over $\param$:
\begin{align}
\label{eq:normal_prior}
p(\data | \param) = \prod_{\mathbf{x}, y \in \data} \norm(y; f(\mathbf{x}; \param), \sigma^2)
\quad \text{~and~} \quad
p(\param) = \norm(0, \alpha^2 I_m)
\end{align}
For each likelihood component, we have exactly one observation $y$, which is also the \ML estimate for the mean parameter of a density with the same shape.
The label of each data-point will be resampled as follows:
\begin{equation}
	\tilde{y} \sim \norm(y, \sigma^2)
\end{equation}
where $\tilde{y}$ denotes a perturbed version of the originally observed label $y$.
We shall denote the entire dataset of perturbed labels as 
$\tilde{\data}$, such that $(\mathbf{x}, \tilde{y}) \in \tilde{\data}$.

In terms of a Bayesian treatment, variability is also explained by the prior distribution.
We shall capture this bevahior by introducing a perturbation on the prior, so that each perturbed model is attracted to a different prior sample.
Considering the prior of Equation \eqref{eq:normal_prior}, we create a perturbed version by resampling parameter components as follows:
\begin{equation}
p(\param;\tilde{\param}) = \norm(\tilde{\param}, \alpha^2 I_m),
\quad \text{ where } \quad \tilde{\param} \sim \norm(0, \alpha^2 I_m)
\end{equation}
The perturbed distribution $p(\param;\tilde{\param})$ depends on $\tilde{\param}$, which has been sampled from the original Gaussian prior.
The combined resampling will result in the following perturbed joint log-likelihood:
\begin{equation}
\label{eq:logp_peturbed}
\log \tilde{p}(\data, \param) = \log p(\tilde{\data}|\param) + \log p(\param;\tilde{\param})
\end{equation}

Along these lines, we propose the gradient ascent scheme of Algorithm \ref{alg:the_algotithm}, which operates on a set of particles $\{\param^{(1)},\dots,\param^{(k)}\}$, 
where $\param^{(i)} \in \mathbb{R}^m$ for $1 \le i \le k$.
We effectively maximize different realizations of $\log \tilde{p}(\data, \param)$, a process that can be trivially parallelized.
Each particle will be attracted to a different sample of the prior and a different perturbation of the data.

If $q$ is the empirical distribution of the particles, we hope that $q$ can serve as an approximation to the Bayesian posterior, as Algorithm \ref{alg:the_algotithm} approaches convergence.
The distribution $q$ is implicitly initialized to the prior, 
as we have $\theta^{(i)} \leftarrow \tilde{\theta} \sim p(\theta)$.
This is a sensible choice, as any samples away from the support of the prior are guaranteed to have very low 
probability under the posterior.
Notice that we make no other assumptions regarding the shape of $q$; we only know $q$ implicitly through its samples.
An illustration of how $q$ evolves over different stages of the optimization can be seen in Figure \ref{fig:univariate} for a univariate model with Gaussian posterior: 
we draw 200 particles from the prior (blue), which converge to an empirical approximation of the posterior (green histogram).
As a final remark, Algorithm \ref{alg:the_algotithm} can be seen as a special case of recent approaches in the literature \cite{Osband2018,pearce2018_icml}, which were used to quantify uncertainty for reinforcement learning applications.



\paragraph{Example -- Bayesian linear regression} We demonstrate the effect of Algorithm \ref{alg:the_algotithm} on a shallow model with trigonometric features (details in Appendix \ref{ssec:linear_example}).
The particles are represented by the predictive functions in Figure \ref{fig:toy_rff}.
The state of the particles at different optimization stages can be seen in the first four plots of the figure.
The particles approach the true posterior, whose samples can be seen in the rightmost plot of Figure \ref{fig:toy_rff}.
In this case, the ensemble-based strategy performs exact Bayesian inference, as we shall discuss in Section \ref{sec:lineal_models}.

\begin{figure*}
	\includegraphics[width=\linewidth]{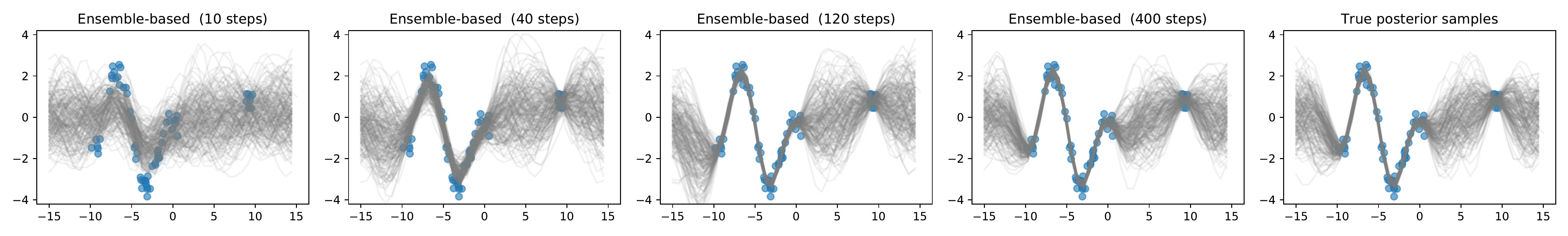}
	\vspace*{-15pt}
	\caption{Bayesian linear regression with trigonometric features -- State of 200 ensemble-based particles at different optimization stages}
	\label{fig:toy_rff}
\end{figure*}

\paragraph{Example -- Regression \relu network}
We next consider a 8-layer \dnn with 50 \relu nodes per layer; more details on the structure and the algorithms used can be found in Section \ref{ssec:relu_example} of the supplement.
Figure \ref{fig:demo_regression} shows the particles given by the ensemble scheme at different stages of Algorithm \ref{alg:the_algotithm}.
As a reference, we use samples of the Metropolis-Hastings algorithm.
As the optimization progresses, the distribution of predictive models improves until it reasonably approximates the \mcmc result.
This is the kind of behavior that we seek to explain in the section that follows.

\begin{figure*}
	\includegraphics[width=\linewidth]{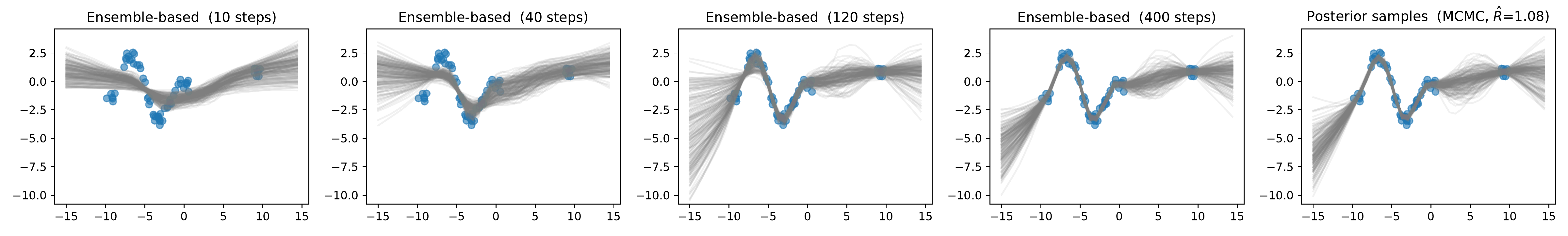}
	\vspace*{-15pt}
	\caption{Regression on a 8-layer \dnn with 50 \relu nodes -- State of 200 ensemble-based particles at different optimization stages.}
	\label{fig:demo_regression}
\end{figure*}




\section{Analysis of Perturbed Gradients}
\label{sec:analysis}

We investigate the effect of Algorithm \ref{alg:the_algotithm} on the implicit distribution $q$.
The linear case is treated first, for which there is an analytical solution.
We subsequently study the impact on nonlinear models.

\subsection{Bayesian Linear Models}
\label{sec:lineal_models}

For Bayesian linear models with $\param = \wvec \in \mathbb{R}^m$, the regression equation takes a simple form:
\begin{equation}
y = \wvec^\top \phi(\mathbf{x}) + \epsilon
\end{equation}
where $\epsilon \sim \mathcal{N}(0, \sigma^2)$ and $\phi(\mathbf{x}) \in \mathbb{R}^{D \times 1}$ is the projection of an input point onto a feature space of $D$ basis functions.
For a Gaussian prior over the weights $\wvec \sim \mathcal{N}(0, \alpha^2 I_m)$, the posterior is known to be Gaussian with mean and variance:
\begin{equation}
\mathrm{E}[\wvec] = \nicefrac{1}{\sigma^2} A^{-1} \Phi \mathbf{y} = \bar{\wvec} 
\quad \text{~and~} \quad 
\mathrm{Cov}[\wvec] = A^{-1}
\end{equation}
where $A = \nicefrac{1}{\sigma^2} \Phi \Phi^\top + \nicefrac{1}{\alpha^2} I_m$, the vector $\mathbf{y} \in \mathbb{R}^{n\times 1}$ contains all training outputs,
and $\Phi \in \mathbb{R}^{D \times n}$ is the design matrix of the training set in the feature space.
For a Gaussian posterior, the expected value is known to be the \MAP estimate.
We consider a perturbation of the labels according to the likelihood so that $\varepsilon \sim \mathcal{N}(0, \sigma^2)$. The prior is perturbed so that $\tilde{\wvec} \sim \mathcal{N}(0, \alpha^2 I_m)$.
Since the problem is convex, the effect on the \MAP estimate can be investigated directly; the \MAP solution will be:
\begin{equation}
\wvec_* = \nicefrac{1}{\sigma^2} A^{-1} \Phi (\mathbf{y} + \varepsilon)
+ \nicefrac{1}{\alpha^2} A^{-1} \tilde{\wvec}
\end{equation}
The distribution of $\wvec_*$ is also Gaussian with expectation $\E_{\varepsilon,\tilde{\wvec}}[\wvec_*] = \bar{\wvec}$ and covariance:
\begin{align}
\E_{\varepsilon,\tilde{\wvec}}[(\wvec_* - \bar{\wvec}) (\wvec_* - \bar{\wvec})^\top] = A^{-1}
\end{align}
which is equal to the covariance of the posterior over the weights
(see Section \ref{sec:suppl_linear} of the supplement).

\subsection{Effect on the gradient of KL-divergence}



Let $\kl[q||p]$ be the divergence between the approximating distribution $q(\param)$ and the posterior $p(\param|\data)$.
In Algorithm \ref{alg:the_algotithm}, the update on an individual particle in line 6 is described by the transformation:
\begin{equation}
\label{eq:tau}
	\tau(\param) = \param + h \nabla \log \tilde{p}(\data, \param)
\end{equation}
Assume that $\param \sim q$; then the transformation $\tau$ will induce a change in $q$ so that $\tau(\param) \sim q_{\tau}$.
It is desirable that the updated distribution $q_{\tau}$ is closer to the true posterior.
Thus, the derivative of the \KL divergence along the direction induced by $\tau$ has to be negative.
Then for a gradient step $h$ small enough, $\tau$ should decrease the \KL divergence and the following difference should be negative:
\begin{equation*}
	\delta_h = \kl[q_{\tau}||p] - \kl[q||p]
\end{equation*}
Because the approximating distribution $q$ is arbitrary, we shall take advantage of the fact that the \KL divergence remains invariant under parameter transformations \citep{Amari00}.
By applying the inverse transformation $\tau^{-1}$, we have the following equivalent difference:
\begin{equation}
\label{eq:kl_diff}
	\delta_h = \kl[q||p_{\tau^{-1}}] - \kl[q||p]
	       = \mathrm{E}_q [ \log p(\param|\data) - \log p_{\tau^{-1}}(\param|\data)]
\end{equation}
where $p_{\tau^{-1}}$ denotes the transformed posterior density by $\tau^{-1}$, which can be expanded as follows \citep{Bishop06}:
\begin{equation}
	p_{\tau^{-1}}(\param|\data) = p(\tau(\param)|\data) \det\{I_m + h \Hessian \log \tilde{p}(\data, \param)\}
\end{equation}
After substituting $p_{\tau^{-1}}$ in \eqref{eq:kl_diff}, we can calculate the directional derivative of the \KL along the direction of $\tau$ by considering the following limit:
\begin{equation}
\label{eq:kl_dir_deriv}
\lim_{h \to 0} \frac{\delta_h}{h}
= -\E_q[\nabla \log p^\top \nabla \log \tilde{p} + \mathrm{tr}{\{\Hessian \log \tilde{p}\}}] 
= \nabla_{\tau} \kl[q||p] 
\end{equation}
where:
\begin{equation*}
\label{eq:the_limits}
\begin{split}
\nabla \log p^\top \nabla \log \tilde{p} &= \lim_{h\to 0} \frac{\log p(\data, \param + h \nabla \log \tilde{p}) - \log p}{h}
\\
\mathrm{tr}{\{\Hessian \log \tilde{p}\}}&= \lim_{h\to 0} \frac{\log \det\{I_m + h \Hessian \log \tilde{p}\}}{h}
\end{split}
\end{equation*}
To keep notation concise,
we refer to the joint log-densities $\log p(\data, \param)$ and $\log \tilde{p}(\data, \param)$ simply as $\log p$ and $\log \tilde{p}$ correspondingly.
The first of the two limits above is the directional derivative towards the gradient $\nabla \log \tilde{p}$.
In a gradient ascent scheme, it is expected to have a positive value which gradually approaches zero over the course of optimization.


Ideally, the directional derivative in \eqref{eq:kl_dir_deriv} should stay negative (or zero) as $\log \tilde{p}(\data,\param)$ is maximized.
The conditions under which this is true are reflected in the following theorem:

\begin{theorem}
	\label{th:main_theorem}
	Let $\log \tilde{p}(\data,\param)$ be a perturbed Bayesian model, 
	and $q$ an arbitrary distribution that approximates the true posterior $p(\param|\data)$.
	The transformation $\tau(\param) = \param + h \nabla \log \tilde{p}(\data, \param)$ will induce a change of measure such that the directional derivative $\nabla_{\tau} \kl[q||p]$ is non-positive if:
	\begin{equation}
	\label{eq:the_inequality}
		\E_q[\nabla \log p^\top \nabla \log \tilde{p}] \ge -\E_q[\mathrm{tr}{\{\Hessian \log \tilde{p}\}}]
	\end{equation}
\end{theorem}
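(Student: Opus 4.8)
The plan is to recognize that Inequality~\eqref{eq:the_inequality} is exactly the condition for the directional derivative displayed in Equation~\eqref{eq:kl_dir_deriv} to be non-positive, so the substance of the proof lies in justifying that identity rigorously, after which the final step is a mere rearrangement.

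First I would make the change of measure precise. For $h$ small enough the Jacobian $I_m + h\,\Hessian \log \tilde p$ is a small perturbation of the identity and hence invertible (in the models of interest $\Hessian \log \tilde p$ is bounded, or even vanishes almost everywhere), so $\tau$ is a diffeomorphism and the pushforward $q_\tau$ of $q$ under $\tau$ has a density. Since the \KL divergence is invariant under a common invertible reparametrization of both of its arguments \citep{Amari00}, applying $\tau^{-1}$ gives $\kl[q_\tau||p] = \kl[q||p_{\tau^{-1}}]$ and therefore $\delta_h = \E_q[\log p(\param|\data) - \log p_{\tau^{-1}}(\param|\data)]$ as in Equation~\eqref{eq:kl_diff}. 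Substituting the change-of-variables expansion $p_{\tau^{-1}}(\param|\data) = p(\tau(\param)|\data)\,\det\{I_m + h\,\Hessian \log \tilde p\}$ and splitting the logarithm yields
\[
\delta_h = \E_q\!\left[\log p(\param|\data) - \log p(\tau(\param)|\data) - \log\det\{I_m + h\,\Hessian \log \tilde p\}\right].
\]

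Next I would divide by $h$ and pass to the limit $h \to 0$ term by term, interchanging limit and $\E_q$ by dominated convergence (using boundedness of the first and second derivatives of $\log \tilde p$ over the relevant region). Because $\tau(\param) = \param + h\,\nabla\log\tilde p$ and $\log p(\param|\data)$ differs from $\log p$ only by the $\param$-independent constant $\log p(\data)$, the first difference quotient tends to the directional derivative $\nabla\log p^\top \nabla\log\tilde p$; and $\log\det\{I_m + hM\} = h\,\mathrm{tr}(M) + O(h^2)$ gives $\mathrm{tr}\{\Hessian \log \tilde p\}$ for the second. This reproduces Equation~\eqref{eq:kl_dir_deriv}, namely $\nabla_\tau \kl[q||p] = -\E_q[\nabla\log p^\top \nabla\log\tilde p + \mathrm{tr}\{\Hessian \log \tilde p\}]$.

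Finally, by linearity of expectation, $\nabla_\tau \kl[q||p] \le 0$ holds if and only if $\E_q[\nabla\log p^\top \nabla\log\tilde p] + \E_q[\mathrm{tr}\{\Hessian \log \tilde p\}] \ge 0$, which is precisely Inequality~\eqref{eq:the_inequality}; in particular the stated sufficient condition follows. I expect the only delicate point to be the limit/expectation interchange together with its regularity requirements --- keeping the Jacobian invertible and dominating the difference quotients uniformly in $h$ --- and this is exactly where the \relu case needs care, since there $\Hessian \log \tilde p$ carries a distributional Hessian of the network that vanishes almost everywhere and must be shown to contribute zero to the trace term.
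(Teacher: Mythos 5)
Your proposal is correct and follows essentially the same route as the paper: it re-derives the directional derivative in \eqref{eq:kl_dir_deriv} via the invariance of the \kl divergence under $\tau^{-1}$, the change-of-variables expansion of $p_{\tau^{-1}}$, and the log-determinant limit, and then observes that \eqref{eq:the_inequality} is exactly the rearrangement of $\nabla_{\tau}\kl[q||p]\le 0$. The additional attention you give to invertibility of the Jacobian for small $h$ and to the limit/expectation interchange is a welcome tightening of regularity details that the paper leaves implicit, but it does not change the argument.
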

\begin{proof}
	The result is produced by a simple manipulation of \eqref{eq:kl_dir_deriv}, and by setting $\nabla_{\tau} \kl[q||p] < 0$.
\end{proof}

The inequality in \eqref{eq:the_inequality} is not always satisfied, as the Hessian can contain negative numbers in its diagonal; e.g., the second derivatives for $\param$ near local maxima should be negative.

As an example where the inequality in \eqref{eq:the_inequality} is violated, consider a convex \emph{unperturbed}
joint log-likelihood, i.e.\ different particles $\param\sim q$ optimize $\log p(\data,\param)$.
Eventually, the different gradients $\nabla\log p$ will approach zero for any $\param$. 
The directional derivative expectation will also approach zero, as all points converge to the same maximum.
The directional derivative of the \KL divergence will tend to be positive, implying that further application of the transformation $\tau$ results in poorer approximation of the true posterior.

In the general case, it is rather difficult to reason precisely about the value of $\nabla_{\tau} \kl[q||p]$.
Nevertheless, we conjecture that the introduction of a perturbation will make the inequality in \eqref{eq:the_inequality} less likely to be violated.
We demonstrate this effect for certain kinds of prior and likelihood in the following section.

\subsection{Gaussian prior and likelihood}

Let $f(\mathbf{x};\param)$ be the output of a nonlinear model (i.e.\ a \dnn) and  let $\param \in \mathbb{R}^m$ be a vectorization of its parameters including weight and bias terms.
We shall consider a Gaussian prior $\norm(0, \alpha^2 I_m)$ and a likelihood function of the form $\norm(f(\mathbf{x};\param), \sigma^2)$.

Let $\norm(\tilde{\param}, \alpha^2 I_m)$  denote a perturbed version of the prior, where $\tilde{\param} \sim \mathcal{N}(0, \alpha^2 I_m)$.
Let the perturbed version of the data be $\tilde{\data}$, where for all $(\mathbf{x}, y) \in \data$ and $(\mathbf{x}, \tilde{y}) \in \tilde{\data}$ we have $\tilde{y} = y + \tilde{y}_0$ and $\tilde{y}_0 \sim \mathcal{N}(0, \sigma^2)$.
The perturbed version of the log-likelihood will be:
\begin{equation}
\label{eq:logp_tilde}
\log \tilde{p}(\data,\param) = -\sum_{\mathbf{x}, \tilde{y} \in \tilde{\data}} \frac{(f(\mathbf{x}; \param) - \tilde{y})^2}{2\sigma^2} - \sum_{j=1}^m \frac{(\param_j - \tilde{\param}_j)^2}{2\alpha^2}
\end{equation}
For the gradient and the Hessian trace of the perturbed log-likelihood above, we have:
\begin{equation}
	\label{eq:E_grad}
	\nabla \log \tilde{p}(\data,\param) = \nabla \log p(\data,\param) + \sum_{\mathbf{x}, \tilde{y} \in \tilde{\data}} \frac{\tilde{y}_0}{\sigma^2} \nabla f(\mathbf{x};\param) + \frac{\tilde{\param}_j}{\alpha^2}
\end{equation}
\begin{equation}
	\label{eq:E_hess}
	\mathrm{tr}{\{\Hessian \log \tilde{p}(\data,\param)\}} = \mathrm{tr}{\{\Hessian \log p(\data,\param)\}}
	+ \sum_{\mathbf{x}, \tilde{y} \in \tilde{\data}} \left( \frac{\tilde{y}_0}{\sigma^2} \sum_{j=1}^m \partial_{\param_j^2} f(\mathbf{x};\param) \right)
\end{equation}

During the optimization process, each particle $\param$ is associated with a particular random perturbation.
We have not made any specific assumptions regarding the approximating distribution $q$, therefore the random variable $\param \sim q$ and the perturbations $\tilde{y}_0$ and $\tilde{\param}$ will be mutually independent.

We leverage this mutual independence and we exploit certain properties of the Gaussian assumptions, in order to further develop Eq.~\eqref{eq:the_inequality} into the theorem that follows.


\begin{theorem} 
	\label{th:gaussian_theorem}
	Let $\log \tilde{p}(\data,\param)$ be a perturbed Bayesian nonlinear model with prior $\norm(0, \alpha^2 I_m)$ and likelihood $\norm(f(\mathbf{x};\param), \sigma^2)$, with perturbations $\tilde{y} \sim \norm(f(\mathbf{x};\param), \sigma^2)$ and $\tilde{\param} \sim \norm(0, \alpha^2 I_m)$.
	Let $q$ be an arbitrary distribution that approximates the true posterior $p(\param|\data)$.
	The transformation $\tau(\param)$ will induce a change of measure such that the directional derivative $\nabla_{\tau} \kl[q||p]$ is non-positive if:
	\begin{equation}
	\label{eq:gaussian_theorem}
		E_{q, \tilde{y}_0, \tilde{\param}}\left[\Vert \nabla \log \tilde{p} \Vert_2^2\right]
		\ge E_q\left[\sum_{\mathbf{x}, y \in \data} \left(\frac{f(\mathbf{x};\param) - y}{\sigma^2} \sum_{j=1}^m \partial_{\param_j^2} f(\mathbf{x};\param) \right)\right]
	\end{equation}
\end{theorem}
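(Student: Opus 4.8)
The plan is to specialize the general sufficient condition \eqref{eq:the_inequality} of Theorem~\ref{th:main_theorem} to the Gaussian prior/likelihood setting by inserting the explicit expressions \eqref{eq:E_grad} and \eqref{eq:E_hess} for the perturbed gradient and Hessian trace into \eqref{eq:the_inequality}, and then averaging over the perturbations $\tilde y_0$ and $\tilde\param$, which are independent of $\param\sim q$. The only facts I will need about the perturbations are their first two moments as dictated by the model, i.e.\ $\E[\tilde y_0]=0$, $\E[\tilde y_0^2]=\sigma^2$, $\E[\tilde\param_j]=0$, $\E[\tilde\param_j^2]=\alpha^2$, together with independence across data replicates and between $\tilde y_0$ and $\tilde\param$.

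Write $\mathbf u := \nabla\log\tilde p - \nabla\log p = \sum_{\mathbf x,\tilde y\in\tilde{\data}}\frac{\tilde y_0}{\sigma^2}\nabla f(\mathbf x;\param) + \frac{1}{\alpha^2}\tilde\param$ for the perturbation-induced part of the gradient exhibited in \eqref{eq:E_grad}. For the left-hand side of \eqref{eq:the_inequality} I would use $\nabla\log p = \nabla\log\tilde p - \mathbf u$ to write $\nabla\log p^\top\nabla\log\tilde p = \|\nabla\log\tilde p\|_2^2 - \mathbf u^\top\nabla\log\tilde p$; because $\E[\mathbf u\mid\param]=0$, the cross term collapses, under the perturbation expectation, to $\E[\|\mathbf u\|_2^2\mid\param]$, which by independence and the prescribed variances equals $\frac{1}{\sigma^2}\sum_{\mathbf x}\|\nabla f(\mathbf x;\param)\|_2^2 + \frac{m}{\alpha^2}$ (all cross terms, likelihood against prior and across replicates, vanish). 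For the right-hand side, the perturbation part of \eqref{eq:E_hess} is linear in $\tilde y_0$ and hence averages to zero, so $\E_{\tilde y_0,\tilde\param}[\mathrm{tr}\{\Hessian\log\tilde p\}\mid\param]=\mathrm{tr}\{\Hessian\log p\}$; differentiating \eqref{eq:logp_tilde} twice with $\tilde y=y$ and $\tilde\param=0$ makes this explicit as $\mathrm{tr}\{\Hessian\log p\} = -\frac{1}{\sigma^2}\sum_{\mathbf x,y}\|\nabla f\|_2^2 - \frac{1}{\sigma^2}\sum_{\mathbf x,y}(f-y)\sum_j\partial_{\param_j^2}f - \frac{m}{\alpha^2}$.

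Substituting both computations into \eqref{eq:the_inequality} and taking $\E_q[\cdot]$, the gradient-norm and $m/\alpha^2$ contributions produced by $\E[\|\mathbf u\|_2^2]$ combine with the like contributions inside $-\mathrm{tr}\{\Hessian\log p\}$; after reorganizing, the left-hand side is $\E_{q,\tilde y_0,\tilde\param}[\|\nabla\log\tilde p\|_2^2]$ and the only term on the right whose sign is not a priori controlled is $\E_q\!\big[\sum_{\mathbf x,y}\frac{f(\mathbf x;\param)-y}{\sigma^2}\sum_j\partial_{\param_j^2}f(\mathbf x;\param)\big]$, which is exactly \eqref{eq:gaussian_theorem}.

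The step I expect to be most delicate is this last reorganization: one must keep careful track of the quadratic perturbation term $\E[\|\mathbf u\|_2^2]$ so that the gradient-norm and prior-variance pieces it generates are matched against their counterparts in the curvature trace, and one must confirm that every Gaussian cross term really does vanish, which is precisely where the exact agreement between the perturbation variances and the model's $\sigma^2,\alpha^2$ is used and where the mutual independence of $\param$, $\tilde y_0$ and $\tilde\param$ (and of the per-datapoint noises) is essential. The remainder, expanding the second derivatives of $\log\tilde p$ and collecting the sums over data points, is routine bookkeeping once these moment identities are in hand.
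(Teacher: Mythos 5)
Your overall strategy is the same as the paper's: specialize condition \eqref{eq:the_inequality} of Theorem~\ref{th:main_theorem}, average over the perturbations using their independence from $\param\sim q$ and the moments $\E[\tilde y_0]=0$, $\E[\tilde y_0^2]=\sigma^2$, $\E[\tilde\param]=0$, $\E[\tilde\param_j^2]=\alpha^2$, and match the resulting quadratic terms against those in $\mathrm{tr}\{\Hessian\log p\}$; the paper does this via Lemma~\ref{lemma_fortheorem2}. The problem is your final step. With your own (and algebraically consistent) evaluation of the cross term, $\E_{\tilde y_0,\tilde\param}[\mathbf{u}^\top\nabla\log\tilde p]=\E[\Vert\mathbf{u}\Vert_2^2]$, the left-hand side of \eqref{eq:the_inequality} becomes
\begin{equation*}
\E_{\tilde y_0,\tilde\param}\left[\nabla\log p^\top\nabla\log\tilde p\right]
=\E_{\tilde y_0,\tilde\param}\left[\Vert\nabla\log\tilde p\Vert_2^2\right]
-\frac{1}{\sigma^2}\sum_{\mathbf{x},y\in\data}\Vert\nabla f(\mathbf{x};\param)\Vert_2^2-\frac{m}{\alpha^2},
\end{equation*}
so the gradient-norm and prior pieces enter with a \emph{minus} sign, while $-\mathrm{tr}\{\Hessian\log p\}$ contributes the same two quantities with a \emph{plus} sign on the other side. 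They therefore do not cancel when you reorganize: they double, and what your derivation actually establishes is sufficiency of
\begin{equation*}
\E_{q,\tilde y_0,\tilde\param}\left[\Vert\nabla\log\tilde p\Vert_2^2\right]
\;\ge\;
\E_q\left[\frac{2}{\sigma^2}\sum_{\mathbf{x},y\in\data}\Vert\nabla f(\mathbf{x};\param)\Vert_2^2+\frac{2m}{\alpha^2}
+\sum_{\mathbf{x},y\in\data}\frac{f(\mathbf{x};\param)-y}{\sigma^2}\sum_{j=1}^m\partial_{\param_j^2}f(\mathbf{x};\param)\right],
\end{equation*}
which is strictly stronger than \eqref{eq:gaussian_theorem} (the extra terms are non-negative), so \eqref{eq:gaussian_theorem} itself is not reached. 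In particular, under this stronger condition the consequence emphasized in the paper for piecewise-linear (\relu) models would no longer be automatic, since the right-hand side stays positive even when all $\partial_{\param_j^2}f$ vanish.

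The paper lands on \eqref{eq:gaussian_theorem} because Lemma~\ref{lemma_fortheorem2} asserts $\Vert\nabla\log p\Vert_2^2=\E_{\tilde y_0,\tilde\param}[\Vert\nabla\log\tilde p\Vert_2^2]+\frac{1}{\sigma^2}\sum\Vert\nabla f\Vert_2^2+\frac{m}{\alpha^2}$, with a \emph{plus} sign, obtained by discarding the cross term between the perturbation-induced part of the gradient and $\nabla\log\tilde p$ on the grounds that the perturbations are zero-mean; your computation instead evaluates that same cross term as $\E[\Vert\mathbf{u}\Vert_2^2]$, because $\nabla\log\tilde p=\nabla\log p+\mathbf{u}$ is itself correlated with $\mathbf{u}$. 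These two treatments are incompatible, and your bookkeeping cannot simultaneously keep the correct cross term and obtain the cancellation that yields \eqref{eq:gaussian_theorem}: either you adopt the lemma exactly as the paper states it (dropping your cross-term step), or you keep your evaluation and accept the stronger displayed condition as the conclusion. As written, the claim that the reorganization gives ``exactly'' \eqref{eq:gaussian_theorem} is the genuine gap.
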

\begin{proof}
	We can calculate the expectation of the \KL directional derivative w.r.t.\ $\tilde{y}_0$, $\tilde{\param}$ by noticing that $\E_{\tilde{y}_0} [\tilde{y}_0] = 0$ and $\E_{\tilde{\param}} [\tilde{\param}] = 0$:
	\begin{equation}
	\label{eq:E_kl_deriv}
	\E_{\tilde{y}_0, \tilde{\param}}\left[\nabla_{\tau} \kl[q||p]\right]
	= -\E_q[\nabla \log p^\top \nabla \log p  +  \mathrm{tr}{\{\Hessian \log p\}}]
	\end{equation}
	We next express the gradient norm $\Vert \nabla \log p \Vert_2^2$  in terms of the expectation $\E_{\tilde{y}_0, \tilde{\param}} \left[\Vert \nabla \log \tilde{p} \Vert_2^2\right]$ (see Lemma \ref{lemma_fortheorem2} in Appendix \ref{ssec:proof_nonlin_normal}).
	The same quadratic terms appear in both the norm expectation and the Hessian, thus the inequality simplifies to Eq.~\eqref{eq:gaussian_theorem}.
	See details in Section \ref{ssec:proof_nonlin_normal} of the supplement.
\end{proof}

Theorem \ref{th:gaussian_theorem} involves the expectation of the perturbed gradient norm. 
This should be a positive value that approaches zero, as the optimization converges.
The theorem demands that this positive value is larger in expectation than 
a summation involving the diagonal second derivatives of $f(\mathbf{x}; \param)$.

The fraction term in r.h.s.\ of \eqref{eq:gaussian_theorem} may have large absolute values if the particles $\param \sim q$ are far from the posterior, but so will the perturbed gradient norm.
However, the difference $f(\mathbf{x};\param)-y$ is not evaluated in absolute value; if the data are reasonably approximated, any discrepancies will be averaged out. 
Nevertheless, it is rather difficult to reason about the magnitude of this difference in the general case.



\vspace{-10pt}
\paragraph{Remarks on linear and piecewise-linear models}
The second-order derivative term in r.h.s.\ of \eqref{eq:gaussian_theorem} represents the curvature of the regression model learned.
This term can be further simplified for certain families of functions.
For a linear model, $f$ will be linear w.r.t.\ the parameters; this means that the directional derivative of \KL divergence is \emph{guaranteed} to be non-positive.
This result can be extended to functions that are only piecewise-linear w.r.t.\ their parameters.
A popular \dnn design choice in the recent years involves \relu activations, which are known to produce piecewise-linear models.
It can be easily seen that the Hessian of a \relu network is defined almost everywhere and its diagonal contains zeros. 
The set for which the Hessian is not defined has zero measure; for the same set the gradient is not defined either, but this has little effect on the usability of \relu models.

As a final remark for models for which $\partial_{\param_j^2} f(\mathbf{x};\param)=0$, the \KL divergence will only decrease over the course of optimization, until its directional derivative finally becomes zero.
That does not guarantee that the \KL divergence is optimized, as the derivative would have to be zero towards \emph{any} direction.
For example, the directional derivative could be zero if the direction of $\tau$ is orthogonal to the \KL divergence gradient; that is a very particular case however.
Nevertheless, for the linear case we know that the perturbed optimization produces exact samples from the posterior (Section \ref{sec:lineal_models}).



\paragraph{Example -- Toy \relu network}
\label{ssec:example_toy}
We demonstrate the effect of Theorem \ref{th:gaussian_theorem} through an example that allows us to monitor the progress of optimization.
We consider: $f(x) = \text{\relu}(w_1 x) + \text{\relu}(w_2 x)$, where $x \in \mathbb{R}$.
The model is parameterized only through the weights $w_1, w_2 \in \mathbb{R}$;
we can effectively calculate the divergence from the posterior distribution by means of \emph{kernel density estimation} (\KDE) and numerical integration\footnote{We have used the bivariate \KDE and integration routines available in the python packages \texttt{scipy.stats} and \texttt{scipy.integrate} correspondingly.}.
The  posterior is analytically intractable, so we resort to \mcmc \citep{Gelman1992a} to approximate the ground truth.
\nocite{Gelman1992a}
Figure \ref{fig:toy_relu} summarizes the results on a synthetic dataset of $24$ points.
\begin{figure}
	\includegraphics[width=\linewidth]{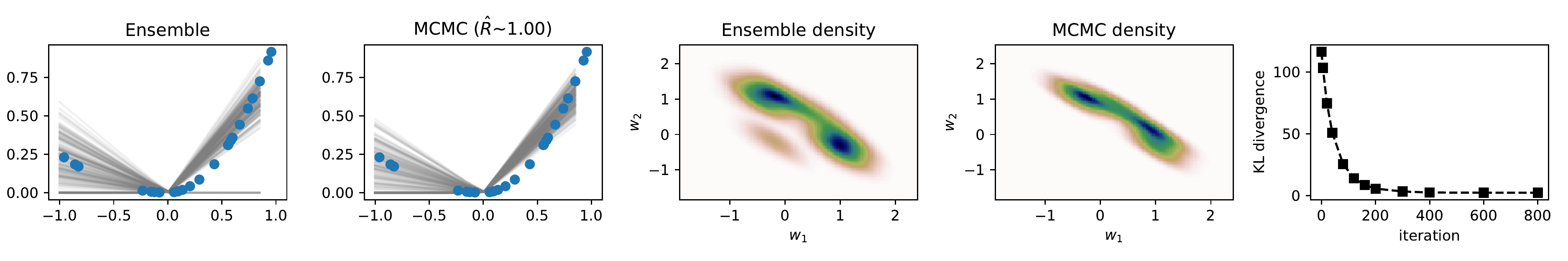}
	\vspace*{-20pt}
	\caption{Toy \relu network -- First two plots: Predictive models by bootstrap ensembles (left) and \mcmc (right). Next two plots: Densities of the bivariate weight distributions (via \KDE). Last plot: Evolution of KL divergence between the ensemble-based scheme and \mcmc.}
	\label{fig:toy_relu}
	\vspace*{-10pt}
\end{figure}
Samples from the predictive model distributions can be seen in the first two plots, ensemble-based on the left and \mcmc on the right; details on the setup can be found in Appendix \ref{ssec:toy_example}.
In the next two plots, we see the respective estimated densities (via \KDE) for the bivariate weight distribution.
In the final plot of Figure \ref{fig:toy_relu}, we see the evolution of \kl-divergence from the true posterior as optimization progresses.
Although the ensemble method results in a distribution that is not exactly the same as \mcmc, this example shows how Theorem \ref{th:gaussian_theorem} works.
The evolution of \kl-values forms a strictly decreasing curve, which is in agreement with our theory: any optimization step on an ensemble of \relu networks should have a non-detrimental effect on approximation quality.


\section{Conclusions}
Ensemble learning has found applications in problems where quantification of uncertainty matters, particularly those involving \dnns \citep{Lakshminarayanan2017,Osband2018,pearce2018_icml}.
It offers a practical means to uncertainty quantification by relying exclusively on optimization.
While this has been shown to work well empirically, there has been no attempt to formally characterize the underpinning theory beyond the linear case. 

In this paper, we employed variational arguments to establish a connection between a certain kind of ensemble learning and Bayesian inference beyond linear regression with a Gaussian likelihood, for which this connection was already known. 
In particular, we interpreted the particles associated to perturbed versions of the joint log-likelihood as samples from a distribution approximating the posterior over model parameters. 
We then derived conditions under which any optimization steps of these particles yields an improvement of the divergence between the approximate and the actual posterior.
Remarkably, the conditions do not require any particular form for the approximation and they are purely geometrical, involving first and second derivatives of the modeled function with respect to model parameters, giving insights on the behavior of ensemble learning on a number of interesting models. 
In particular, we showed how these results can be applied to \dnns with \relu activations to establish that the optimization of the particles 
is guaranteed to have no detrimental contribution to the \kl divergence. 

We believe that this work makes a significant step in broadening the scope of ensemble methods for quantification of uncertainty.
However, we are considering further extensions of our analysis to include more general classes of models, likelihoods, and priors, as well as investigations on how to use the criterior in Theorem~1 to derive novel particle-based variational inference methods.

\vspace{100pt}
\appendix

\section{Results for Bayesian linear models}
\label{sec:suppl_linear}

Consider the following Bayesian liner model with $\param = \wvec \in \mathbb{R}^m$:
\begin{equation}
y = \wvec^\top \phi(\mathbf{x}) + \epsilon
\end{equation}
with prior $\wvec \sim \mathcal{N}(0, \alpha^2 I_m)$ and noise model $\epsilon \sim \mathcal{N}(0, \sigma^2)$.
If the labels are perturbed according to the likelihood so that $\epsilon \sim \mathcal{N}(0, \sigma^2 I)$, and the regularisation term is a sample from the prior so that $\tilde{\wvec} \sim \mathcal{N}(0, \alpha^2 I_m)$,
then the MAP solution will be:
\begin{equation}
\wvec_* = \frac{1}{\sigma^2} A^{-1} \Phi (\mathbf{y} + \epsilon)
+ \frac{1}{\alpha^2} A^{-1} \tilde{\wvec}
\end{equation}
Regarding the covariance of $\wvec_*$ we have:
\begin{align*}
\E_{\epsilon,\tilde{\wvec}}[(\wvec_* - \bar{\wvec}) (\wvec_* - \bar{\wvec})^\top] 
&= \E_{\epsilon} \left[\frac{1}{\sigma^2} A^{-1} \Phi \epsilon \epsilon^\top \Phi^\top A^{-1} \frac{1}{\sigma^2} \right]
+ \E_{\tilde{\wvec}} \left[\frac{1}{\alpha^2} A^{-1} \tilde{\wvec} \tilde{\wvec}^\top A^{-1} \frac{1}{\alpha^2} \right] \\
&= A^{-1}  \frac{1}{\sigma^2} \Phi \Phi^\top  A^{-1}
+ A^{-1}  \frac{1}{\alpha^2} A^{-1} \\
&= A^{-1} \left(\frac{1}{\sigma^2} \Phi \Phi^\top + \frac{1}{\alpha^2} \right) A^{-1}\\
&= A^{-1}
\end{align*}
which is exactly the correct posterior weight covariance, since $A = \frac{1}{\sigma^2} \Phi \Phi^\top + \frac{1}{\alpha^2} I_m$.

\vspace{50pt}

\section{Detailed derivation for non-linear models}

In a gradient ascent scheme, the update will be described by the following transformation:
\begin{equation*}
\tau(\param) = \param + h \nabla \log \tilde{p}(\data, \param)
\end{equation*}
The transformed density after applying the differentiable transform $\tau^{-1}$ will be:
\begin{equation*}
\log p_{\tau^{-1}}(\param|\data) = \log p(\tau(\param)|\data) + \log\det \mathbf{J}_{\tau(\param)}
\end{equation*}
where by $\mathbf{J}_{\tau(\param)}$ we denote the Jacobian of $\tau(\param)$.
Notice that we have the Jacobian of the gradient vector, for which we have:
\begin{equation*}
\mathbf{J}_{\tau(\param)} = \mathbf{J}_\param + h \nabla^2 \log \tilde{p}(\data, \param)
= I_m + h \Hessian \log \tilde{p}(\data, \param)
\end{equation*}
Finally, we obtain:
\begin{equation*}
\log p_{\tau^{-1}}(\param|\data) = \log p(\tau(\param)|\data) + \log\det\{I_m + h \Hessian \log \tilde{p}(\data, \param)\}
\end{equation*}

Given that $\delta_h = \kl[q||p_{\tau^{-1}}] - \kl[q||p]$, we obtain the directional derivative by considering the limit: 
\begin{equation*}
\begin{split}
\lim_{h \to 0} \frac{\delta_h}{h}
&= -\E_q \left[
\lim_{h\to 0} \frac{
	\log p(\tau(\param) | \data) + \log\det\{I_m + h \Hessian \log \tilde{p}(\data,\param)\}
	- \log p(\param|\data)
}{h}
\right]
\\
&= -\E_q \left[
\lim_{h\to 0} \frac{
	\log p(\data, \tau(\param))
	+ \log \det\{I_m + h \Hessian \log \tilde{p}(\data,\param)\}
	- \log p(\param,\data)
}{h}
\right]
\\
&= -\E_q \left[
\lim_{h\to 0} \frac{\log p(\data, \param + h \nabla \log \tilde{p}) - \log p}{h}
+
\lim_{h\to 0} \frac{ \log\det\{I_m + h \Hessian \log \tilde{p}\} }{h}
\right]
\end{split}
\end{equation*}
Where we refer to the log-densities $\log p(\data, \param)$ and $\log \tilde{p}(\data, \param)$ simply as $\log p$ and $\log \tilde{p}$.


\vspace{50pt}

\section{Results for non-linear models with Gaussian likelihood and prior}

Let $\log p(\data,\param)$ be the joint log-likelihood of a model with likelihood $\norm(f(\mathbf{x};\param), \sigma^2)$ and prior $\norm(0, \alpha^2 I_m)$.
We consider a perturbation of the prior $\tilde{\param} \sim \mathcal{N}(0, \alpha^2 I_m)$, and the data 
$\tilde{\data}$ such that for all $(\mathbf{x}, y) \in \data$ and $(\mathbf{x}, \tilde{y}) \in \tilde{\data}$ we have $\tilde{y} = y + \tilde{y}_0$, where $\tilde{y}_0 \sim \mathcal{N}(0, \sigma^2)$.
Then the perturbed version of the log-likelihood will be:
\begin{equation}
\label{eq:logp_tilde_suppl}
\log \tilde{p}(\data,\param) = -\sum_{\mathbf{x}, \tilde{y} \in \tilde{\data}} \frac{(f(\mathbf{x}; \param) - \tilde{y})^2}{2\sigma^2} - \sum_{j=1}^m \frac{(\param_j - \tilde{\param}_j)^2}{2\alpha^2}
\end{equation}

\vspace{20pt}

\subsection{Gradient and Hessian of log-likelihood}

For the components of the gradient we have:
\begin{equation}
\label{eq:logp_grad}
\frac{\partial}{\partial\param_j} \log p(\data, \param)
= - \sum_{\mathbf{x}, y \in \data} \frac{f(\mathbf{x};\param) - y}{\sigma^2} \partial_{\param_j} f(\mathbf{x};\param) - \frac{\param_j}{\alpha^2}
\end{equation}


For the trace of the Hessian, we simply need the derivatives of the gradient components:
\begin{equation}
\label{eq:logp_hess}
\begin{split}
\mathrm{tr}{\{\Hessian \log p(\data, \param)\}}
& = \sum_{j=1}^m \frac{\partial^2}{\partial\param_j^2} \log p(\data, \param) \\
& = \sum_{j=1}^m \left( - \sum_{\mathbf{x}, y \in \data} \frac{1}{\sigma^2} (\partial_{\param_j} f(\mathbf{x};\param))^2 
- \sum_{\mathbf{x}, y \in \data} \frac{f(\mathbf{x};\param) - y}{\sigma^2} \partial_{\param_j^2} f(\mathbf{x};\param)
-\frac{1}{\alpha^2} \right) \\
& = -\frac{1}{\sigma^2} \sum_{\mathbf{x}, y \in \data} \Vert\nabla f(\mathbf{x};\param)\Vert_2^2
-\frac{m}{\alpha^2} 
- \sum_{\mathbf{x}, y \in \data} \left(\frac{f(\mathbf{x};\param) - y}{\sigma^2}
\sum_{j=1}^m \partial_{\param_j^2} f(\mathbf{x};\param) \right)
\end{split}
\end{equation}

\vspace{20pt}

\subsection{Expectation of perturbed gradient}
\label{ssec:suppl_E_glogp_tilde}
The components of the perturbed gradient will be:
\begin{equation}
\begin{split}
\label{eq:perturb_grad}
\frac{\partial}{\partial\param_j} \log \tilde{p}(\data, \param)
&= -  \sum_{\mathbf{x}, \tilde{y} \in \tilde{\data}} \frac{f(\mathbf{x};\param) - \tilde{y}}{\sigma^2} \partial_{\param_j} f(\mathbf{x};\param) - \frac{\param_j - \tilde{\param}_j}{\alpha^2}\\
&= \frac{\partial}{\partial\param_j} \log p(\data, \param)
+ \sum_{\mathbf{x}, \tilde{y} \in \tilde{\data}} \frac{\tilde{y}_0}{\sigma^2} \partial_{\param_j} f(\mathbf{x};\param) + \frac{\tilde{\param}_j}{\alpha^2}
\end{split}
\end{equation}
It is easy to see that for the gradient of a perturbed log-likelihood, its expectation with respect to the perturbation will be equal to the unperturbed gradient:
\begin{equation}
\E_{\tilde{y}_0, \tilde{\param}} [\nabla \log \tilde{p}(\data, \param)] = \nabla \log p(\data, \param)
\end{equation}

\vspace{20pt}

\subsection{Expectation of perturbed Hessian}
\label{ssec:suppl_E_Hlogp_tilde}

For the trace we only need the diagonal components of the perturbed Hessian; by differentiating Equation \eqref{eq:perturb_grad} by $\param_j$ we get:
\begin{equation}
\begin{split}
\mathrm{tr}{\{\Hessian \log \tilde{p}(\data, \param)\}}
& = \sum_{j=1}^m  \frac{\partial^2}{\partial\param_j^2} \log p(\data, \param)
+ \sum_{\mathbf{x}, \tilde{y} \in \tilde{\data}} \left( \frac{\tilde{y}_0}{\sigma^2} \sum_{j=1}^m \partial_{\param_j^2} f(\mathbf{x};\param) \right)
\end{split}
\end{equation}
Since $\E_{\tilde{y}_0} [\tilde{y}_0] = 0$, we have:
\begin{equation}
\E_{\tilde{y}_0, \tilde{\param}} [\mathrm{tr}{\{\Hessian \log \tilde{p}(\data, \param)\}}] = \mathrm{tr}{\{\Hessian \log p(\data, \param)\}}
\end{equation}

\vspace{20pt}

\subsection{Proof of Theorem 2}
\label{ssec:proof_nonlin_normal}

Before proving Theorem 2, we shall review the following lemma:
\begin{lemma}
	\label{lemma_fortheorem2}
	Let $\log \tilde{p}(\data,\param)$ be a perturbed Bayesian non-linear model as in \eqref{eq:logp_tilde_suppl}.
	For the perturbation distributions we assume: $\tilde{y}_0 \sim \norm(0, \sigma^2)$ and $\tilde{\param} \sim \norm(0, \alpha I_m)$.
	Then, for arbitrary $\param$ we have:
	\begin{equation}
	\Vert \nabla \log p \Vert_2^2 = 
	\E_{\tilde{y}_0, \tilde{\param}} \left[ \Vert \nabla \log \tilde{p} \Vert_2^2	\right] 
	+ \frac{1}{\sigma^2} \sum_{\mathbf{x}, \tilde{y} \in \tilde{\data}} \Vert\nabla f(\mathbf{x};\param)\Vert_2^2 + \frac{m}{\alpha^2}
	\end{equation}
\end{lemma}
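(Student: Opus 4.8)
The plan is to prove the identity by a direct second-moment computation, treating $\param$ as fixed: the statement is for arbitrary $\param$, so the only randomness lives in the perturbations $\tilde{y}_0$ and $\tilde{\param}$, while $\nabla\log p$ and $\nabla f(\mathbf{x};\param)$ are deterministic. The natural starting point is the decomposition of the perturbed gradient already recorded in \eqref{eq:perturb_grad}, namely $\nabla\log\tilde{p} = \nabla\log p + \mathbf{v}$, where the $j$-th component of the perturbation vector is $\mathbf{v}_j = \sum_{\mathbf{x},\tilde{y}\in\tilde{\data}} \frac{\tilde{y}_0}{\sigma^2}\partial_{\param_j} f(\mathbf{x};\param) + \frac{\tilde{\param}_j}{\alpha^2}$. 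Here $\mathbf{v}$ is the only stochastic object, and establishing the lemma amounts to computing its contribution to the squared norm.

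First I would expand $\Vert\nabla\log\tilde{p}\Vert_2^2 = \Vert\nabla\log p\Vert_2^2 + 2\,\nabla\log p^\top\mathbf{v} + \Vert\mathbf{v}\Vert_2^2$ and take $\E_{\tilde{y}_0,\tilde{\param}}[\cdot]$. The cross term vanishes because $\E_{\tilde{y}_0}[\tilde{y}_0]=0$ and $\E_{\tilde{\param}}[\tilde{\param}_j]=0$ force $\E[\mathbf{v}]=0$, so that $\E[\nabla\log p^\top\mathbf{v}] = \nabla\log p^\top\E[\mathbf{v}] = 0$. This is exactly the cancellation already exploited to establish $\E[\nabla\log\tilde{p}]=\nabla\log p$ in Section~\ref{ssec:suppl_E_glogp_tilde}, and it reduces the problem to $\E[\Vert\nabla\log\tilde{p}\Vert_2^2] = \Vert\nabla\log p\Vert_2^2 + \E[\Vert\mathbf{v}\Vert_2^2]$.

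The core step is then $\E[\Vert\mathbf{v}\Vert_2^2] = \sum_{j=1}^m \E[\mathbf{v}_j^2]$. Squaring $\mathbf{v}_j$ produces products of distinct data-point perturbations $\tilde{y}_0^{(\mathbf{x})}\tilde{y}_0^{(\mathbf{x}')}$, cross products of a data perturbation with $\tilde{\param}_j$, and the diagonal squares; all mixed terms vanish in expectation by the mutual independence and zero mean of the Gaussian perturbations. Only $\frac{1}{\sigma^4}\sum_{\mathbf{x}}\E[\tilde{y}_0^2]\,(\partial_{\param_j} f)^2 + \frac{1}{\alpha^4}\E[\tilde{\param}_j^2]$ survives, and substituting the second moments $\E[\tilde{y}_0^2]=\sigma^2$ and $\E[\tilde{\param}_j^2]=\alpha^2$ collapses this to $\frac{1}{\sigma^2}\sum_{\mathbf{x}}(\partial_{\param_j} f)^2 + \frac{1}{\alpha^2}$. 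Summing over $j$ and recognizing $\sum_j(\partial_{\param_j} f)^2 = \Vert\nabla f(\mathbf{x};\param)\Vert_2^2$ gives $\E[\Vert\mathbf{v}\Vert_2^2] = \frac{1}{\sigma^2}\sum_{\mathbf{x}}\Vert\nabla f(\mathbf{x};\param)\Vert_2^2 + \frac{m}{\alpha^2}$.

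Finally I would rearrange this relation to isolate $\Vert\nabla\log p\Vert_2^2$, which expresses the unperturbed gradient norm through $\E[\Vert\nabla\log\tilde{p}\Vert_2^2]$ and the correction $\frac{1}{\sigma^2}\sum_{\mathbf{x}}\Vert\nabla f(\mathbf{x};\param)\Vert_2^2 + \frac{m}{\alpha^2}$, yielding the lemma's identity. The main obstacle is purely bookkeeping: there is one independent $\tilde{y}_0$ per data point in addition to the vector $\tilde{\param}$, so some care is needed to confirm that every off-diagonal covariance in $\E[\mathbf{v}_j^2]$ truly vanishes and that only the diagonal variance contributions remain. I would emphasize that these surviving variance terms coincide exactly with the quadratic terms $\frac{1}{\sigma^2}\sum_{\mathbf{x}}\Vert\nabla f\Vert_2^2$ and $\frac{m}{\alpha^2}$ appearing in the Hessian trace \eqref{eq:logp_hess}; this coincidence is precisely what lets them cancel when the lemma is inserted into \eqref{eq:E_kl_deriv}, collapsing the condition of Theorem~\ref{th:gaussian_theorem} to \eqref{eq:gaussian_theorem}.
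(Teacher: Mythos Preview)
Your computation is correct and follows essentially the same line as the paper: both arguments rest on the decomposition $\nabla\log\tilde p = \nabla\log p + \mathbf v$ from \eqref{eq:perturb_grad} together with $\E[\mathbf v]=0$ and the second moments $\E[\tilde y_0^2]=\sigma^2$, $\E[\tilde\param_j^2]=\alpha^2$. The paper expands $\Vert\nabla\log p\Vert_2^2$ by writing $\nabla\log p = \nabla\log\tilde p - \mathbf v$, whereas you expand $\Vert\nabla\log\tilde p\Vert_2^2$; your route is in fact cleaner, because the cross term $\nabla\log p^\top\mathbf v$ genuinely has mean zero ($\nabla\log p$ is deterministic in the perturbations), while the paper's cross term $\nabla\log\tilde p^\top\mathbf v$ does \emph{not}.

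There is, however, a gap at your last step. Your relation $\E\bigl[\Vert\nabla\log\tilde p\Vert_2^2\bigr] = \Vert\nabla\log p\Vert_2^2 + \E\bigl[\Vert\mathbf v\Vert_2^2\bigr]$ rearranges to
\[
\Vert\nabla\log p\Vert_2^2 \;=\; \E_{\tilde y_0,\tilde\param}\bigl[\Vert\nabla\log\tilde p\Vert_2^2\bigr] \;-\; \frac{1}{\sigma^2}\sum_{\mathbf x}\Vert\nabla f(\mathbf x;\param)\Vert_2^2 \;-\; \frac{m}{\alpha^2},
\]
with a \emph{minus} sign on the correction terms, not the plus sign in the lemma as stated; you asserted ``yielding the lemma's identity'' without actually checking that the signs match. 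The discrepancy traces back to the paper itself: its proof discards the cross terms $-2\,\mathbf v^\top\nabla\log\tilde p$ on the grounds that $\E[\tilde y_0]=\E[\tilde\param]=0$, but since $\nabla\log\tilde p = \nabla\log p + \mathbf v$ one has $\E[\mathbf v^\top\nabla\log\tilde p] = \E[\Vert\mathbf v\Vert_2^2] \neq 0$, so the printed lemma carries a sign error. Your closing remark about the cancellation with the Hessian terms in \eqref{eq:logp_hess} is therefore also off: with the corrected sign the two copies of $\frac{1}{\sigma^2}\sum_{\mathbf x}\Vert\nabla f\Vert_2^2 + \frac{m}{\alpha^2}$ add rather than cancel, and the right-hand side of \eqref{eq:gaussian_theorem} would acquire an extra $2\bigl(\frac{1}{\sigma^2}\sum_{\mathbf x}\Vert\nabla f(\mathbf x;\param)\Vert_2^2 + \frac{m}{\alpha^2}\bigr)$.
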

\begin{proof}
	From \eqref{eq:perturb_grad}, we have the following for the original gradient:
	\begin{equation}
	\nabla \log p(\data, \param) = - \sum_{\mathbf{x}, \tilde{y} \in \tilde{\data}} \frac{\tilde{y}_0}{\sigma^2} \nabla f(\mathbf{x};\param) - \frac{\tilde{\param}}{\alpha^2} + \nabla \log \tilde{p}(\data, \param)
	\end{equation}
	
	We consider the following joint expectation with respect to $\tilde{y}_0$ and $\tilde{\param}$:
	\begin{equation}
	\begin{split}
	\E_{\tilde{y}_0, \tilde{\param}} & [\nabla \log p(\data, \param)^\top \nabla \log p(\data, \param)] \\
	& = \E_{\tilde{y}_0, \tilde{\param}} 
	\left[ 
	\left(- \sum_{\mathbf{x}, \tilde{y} \in \tilde{\data}} \frac{\tilde{y}_0}{\sigma^2} \nabla f(\mathbf{x};\param) - \frac{\tilde{\param}}{\alpha^2} + \nabla \log \tilde{p}(\data, \param) \right)^\top 
	\left(- \sum_{\mathbf{x}, \tilde{y} \in \tilde{\data}} \frac{\tilde{y}_0}{\sigma^2} \nabla f(\mathbf{x};\param) - \frac{\tilde{\param}}{\alpha^2} + \nabla \log \tilde{p}(\data, \param) \right)
	\right] 
	\\
	&= \E_{\tilde{y}_0, \tilde{\param}} 
	\left[ 
	\sum_{\mathbf{x}, \tilde{y} \in \tilde{\data}} \frac{\tilde{y}_0^2}{\sigma^4} \Vert \nabla f(\mathbf{x};\param) \Vert_2^2
	+ \sum_{j=1}^m \frac{\tilde{\param}_j^2}{\alpha^4}
	+ \Vert \nabla \log \tilde{p} \Vert_2^2
	\right] 
	\end{split}
	\end{equation}
	Note that the terms of the polynomial that we have omitted are zero in expectation, because we have $\E[\tilde{y}_0]=0$ and $\E[\tilde{\param}]=0$.
	Also since we have $\E[\tilde{y}_0^2]=\sigma^2$ and $\E[\tilde{\param}_j]=\alpha^2$, the expectation becomes:
	\begin{equation}
	\nabla \log p(\data, \param)^\top \nabla \log p(\data, \param) =
	\frac{1}{\sigma^2} \sum_{\mathbf{x}, \tilde{y} \in \tilde{\data}} \Vert\nabla f(\mathbf{x};\param)\Vert_2^2	+ \frac{m}{\alpha^2} 
	+ \E_{\tilde{y}_0, \tilde{\param}} \left[ \Vert \nabla \log \tilde{p} \Vert_2^2	\right]
	\end{equation}
	Now we can move to the main theorem.
\end{proof}

\vspace{20pt}
\begin{customthm}{2}
	Let $\log \tilde{p}(\data,\param)$ be a perturbed Bayesian non-linear model with prior $\norm(0, \alpha^2 I_m)$ and likelihood $\norm(f(\mathbf{x};\param), \sigma^2)$, with perturbations $\tilde{y}_0 \sim \norm(f(\mathbf{x};\param), \sigma^2)$ and $\tilde{\param} \sim \norm(0, \alpha^2 I_m)$.
	Let $q$ be an arbitrary distribution that approximates the true posterior $p(\param|\data)$.
	The transformation $\tau(\param)$ will induce a change of measure such that the directional derivative $\nabla_{\tau} \kl[q||p]$ is non-positive if:
	\begin{equation*}
	\begin{split}
	E_{q, \tilde{y}_0, \tilde{\param}}\left[\Vert \nabla \log \tilde{p} \Vert_2^2\right]
	\ge E_q\left[\sum_{\mathbf{x}, y \in \data} \left(\frac{f(\mathbf{x};\param) - y}{\sigma^2} \sum_{j=1}^m \partial_{\param_j^2} f(\mathbf{x};\param) \right)\right]
	\end{split}
	\end{equation*}
\end{customthm}


\begin{proof}
	
	According to Theorem 1, the directional derivative $\nabla_{\tau} \kl[q||p]$ is non-positive if the following holds:
	\begin{equation*}
	\E_q[\nabla \log p^\top \nabla \log \tilde{p}] \ge -\E_q[\mathrm{tr}{\{\Hessian \log \tilde{p}\}}]
	\end{equation*}

	For a non-linear model with Gaussian prior and likelihood, the gradient and the Hessian trace of the perturbed log-likelihood $\log \tilde{p}$ will have expectations:
	\begin{align}
	\E_{\tilde{y}_0, \tilde{\param}} [\nabla \log \tilde{p}] &= \nabla \log p \\
	\E_{\tilde{y}_0, \tilde{\param}} [\mathrm{tr}{\{\Hessian \log \tilde{p}\}}] &= \mathrm{tr}{\{\Hessian \log p\}}
	\end{align}
	Derivations for the expectations above can be found in Sections \ref{ssec:suppl_E_glogp_tilde} and \ref{ssec:suppl_E_Hlogp_tilde} of the supplementary material.
	Also, we can expand $\nabla\log\tilde{p}$ in the following inner product using \eqref{eq:perturb_grad}:
	\begin{equation*}
	\begin{split}
	\nabla \log p^\top \nabla \log \tilde{p} &= \nabla \log p^\top \left( \nabla \log p
	+ \sum_{\mathbf{x}, \tilde{y} \in \tilde{\data}} \frac{\tilde{y}_0}{\sigma^2} \nabla f(\mathbf{x};\param) + \frac{\tilde{\param}}{\alpha^2}
	\right) 
	\\
	\E_{\tilde{y}_0, \tilde{\param}}[\nabla \log p^\top \nabla \log \tilde{p}] &= \nabla \log p^\top \nabla \log p
	\end{split}
	\end{equation*}
	If we consider the joint expectation with respect to $\param \sim q$, $\tilde{y}_0$ and $\tilde{\param}$, the condition specified by Theorem 1 can be approximated as follows:
	\begin{equation}
	\label{eq:simplified_theorem1}
	\begin{split}
	\E_{q, \tilde{y}_0, \tilde{\param}}[\nabla \log p^\top \nabla \log \tilde{p}] &\ge -\E_{q, \tilde{y}_0, \tilde{\param}}[\mathrm{tr}{\{\Hessian \log \tilde{p}\}}] \\
	\E_q[\nabla \log p^\top \nabla \log p] &\ge -\E_q[\mathrm{tr}{\{\Hessian \log p\}}]
	\end{split}
	\end{equation}


	Finally, if we use Lemma \ref{lemma_fortheorem2} on Equation \eqref{eq:simplified_theorem1} and we expand the Hessian according to Equation \eqref{eq:logp_hess}, we obtain:
	\begin{equation*}
	\begin{split}
	&\E_q\left[ 
	\frac{1}{\sigma^2} \sum_{\mathbf{x}, \tilde{y} \in \tilde{\data}} \Vert\nabla f(\mathbf{x};\param)\Vert_2^2
	+ \frac{m}{\alpha^2} + \E_{\tilde{y}_0, \tilde{\param}} \left[ \Vert \nabla \log \tilde{p} \Vert_2^2	\right]
	\right] \\
	& \ge -\E_q\left[
	-\frac{1}{\sigma^2} \sum_{\mathbf{x}, y \in \data} \Vert\nabla f(\mathbf{x};\param)\Vert_2^2
	-\frac{m}{\alpha^2} 
	- \sum_{\mathbf{x}, y \in \data} \left(\frac{f(\mathbf{x};\param) - y}{\sigma^2}
	\sum_{j=1}^m \partial_{\param_j^2} f(\mathbf{x};\param) \right)
	\right]
	\end{split}	
	\end{equation*}
	which easily simplifies to the condition required for non-positive $\nabla_{\tau} \kl[q||p]$ in Theorem 2.
	
\end{proof}

\vspace{50pt}

\section{Details on the experimental results}
\label{sec:additional}


We show experimentally that the aforementioned ensemble-based scheme can perform competitively in regression tasks, in spite of its simple nature.
As evaluation metrics, we use the test \emph{root mean squared error} (\rmse) and the test \emph{mean negative log-likelihood} (\mnll).

We apply ensemble-based regression using 200 particles on a \relu neural network with 1 and 4 hidden layers having 50 nodes each.
We consider a Gaussian likelihood model $\norm(0, \sigma^2)$, and a Gaussian prior on all weights $\norm(0, \alpha_w^2)$ and biases $\norm(0, \alpha_b^2)$.
The likelihood variance $\sigma^2$ has been selected for each model individually by performing grid search on validation \mnll, for which we withhold 20\% of the training examples.
The prior parameters have been fixed as $\alpha_w^2=1$ and $\alpha_b^2=1$.
We use \lbfgs for the optimization task (32 steps, step size: 0.5).
All datasets have been normalized so that the data (i.e.\ training inputs and labels) has zero mean and unit variance; the metrics reported are calculated after denormalization of the labels.

We compare our ensemble scheme against Stochastic-gradient Hamiltonian Monte Carlo (\sghmc) \cite{Chen14}, which samples from the true posterior distribution (1-layer only).
We have considered the same prior and likelihood as the one used for the ensemble method; the variance parameter of the likelihood has been chosen by cross-validation on \mnll, for which we withhold 20\% of the training examples.
In all cases, We have considered the learning rate to be equal to $0.001$ and the decay parameter was fixed to $0.01$.
We have adopted  the scale-adapted version in \cite{BOHamiANN}, where the rest of the parameters of \sghmc are adjusted during a burn-in phase.
In the experiments considered, we have used a burn-in of $2000$ steps, and we have kept $1$ sample for every $2000$ steps.
We have performed $10$ restarts; for each restart the trajectory was initialized by a different sample from the prior distribution.
Finally, we generated $200$ samples in total, as many as we used for the ensemble method.

As another comparison baseline we use Monte Carlo dropout (\mcd) \citep{Gal16} featuring a network of identical structure as in our setup.
The hyper-parameters have been set according to the guidelines of \citet{gal2018w} using the code of the authors available on-line\footnote{https://github.com/yaringal/DropoutUncertaintyExps}.

We also include in the comparison \gp regression \citep{Rasmussen06}; we have used the algorithms available in the \texttt{GPFlow} library \citep{GPflow17}.
We use the exact \gp algorithm in all cases except from the ``Protein'' dataset, for which we employ sparse variational \gp regression \citep{Titsias09} with 400 inducing points.
We have used an isotropic squared-exponential kernel; its hyper-parameters have been selected by optimizing the marginal log-likelihood (or \elbo, if the variational method is used).

The datasets and results are summarized in Tables \ref{tab:rmse} and \ref{tab:mnll}.
Experiments have been repeated for 10 random training/test splits, except for the ``Protein'' dataset, for which we have used 5 splits.
We report the average values for test \rmse and \mnll, plus/minus one standard deviation.
Results that have been significantly better have been marked in bold.
We see that the ensemble-based approach produces results that are either competitive to other methods or better.
In the main paper we have included only 4-layer results for \mcd, as this has given the best performance for \mcd in most cases.

The simple ensemble-based strategy presented in the previous section has been an adequate tool not only to perform robust regression, but also to quantify predictive uncertainty.

\begin{table*}[ht]
\caption{Regression results: Average test RMSE} \label{tab:rmse}
\begin{center}
{\footnotesize 
\begin{tabular}{lr||cc|cc|c|c}
                 &              &\multicolumn{2}{c|}{\textbf{Ensemble}}       &\multicolumn{2}{c|}{\textbf{MC-Dropout}}  &\textbf{SGHMC}    &\textbf{GP}\\
\textbf{DATASET} &\textbf{SIZE} &1-layer               &4-layer               &1-layer               &4-layer            &1-layer           &            \\
\hline \\
Boston           &506           &3.17$\pm$0.65         &3.17$\pm$0.60         &2.96$\pm$0.40         &2.93$\pm$0.27      &3.55$\pm$0.57     &3.18$\pm$0.63 \\
Concrete         &1030          &5.17$\pm$0.33         &5.47$\pm$0.75         &4.86$\pm$0.26         &4.74$\pm$0.34      &6.17$\pm$0.40     &5.58$\pm$0.35 \\
Energy           &768           &0.45$\pm$0.04         &0.65$\pm$0.38         &0.52$\pm$0.05         &0.45$\pm$0.04      &0.46$\pm$0.04     &0.48$\pm$0.05 \\
Kin8nm           &8192          &0.07$\pm$0.00         &0.06$\pm$0.00         &0.07$\pm$0.00         &0.08$\pm$0.00      &0.08$\pm$0.00     &0.07$\pm$0.00 \\
Naval            &11934         &0.00$\pm$0.00         &0.00$\pm$0.00         &0.00$\pm$0.00         &0.00$\pm$0.00      &0.00$\pm$0.00     &0.00$\pm$0.00 \\
Power            &9568          &4.12$\pm$0.11         &4.03$\pm$0.11         &3.99$\pm$0.12         &3.55$\pm$0.14      &4.23$\pm$0.11     &3.90$\pm$0.11 \\
Protein          &45730         &1.89$\pm$0.03         &1.89$\pm$0.03         &4.23$\pm$0.04         &3.47$\pm$0.02      &1.96$\pm$0.03     &1.88$\pm$0.03 \\
Wine-red         &1599          &0.61$\pm$0.02         &0.61$\pm$0.02         &0.61$\pm$0.02         &0.60$\pm$0.02      &0.62$\pm$0.02     &0.61$\pm$0.02 \\
Yacht            &308           &0.75$\pm$0.23         &0.76$\pm$0.35         &0.80$\pm$0.20         &1.49$\pm$0.28      &0.57$\pm$0.20     &0.49$\pm$0.18 \\
\end{tabular}
}
\end{center}
\end{table*}

\begin{table*}[ht]
\caption{Regression results: Average test MNLL} \label{tab:mnll}
\begin{center}
{\footnotesize 
\begin{tabular}{lr||cc|cc|c|c}
                 &              &\multicolumn{2}{c|}{\textbf{Ensemble}}       &\multicolumn{2}{c|}{\textbf{MC-Dropout}}  &\textbf{SGHMC}    &\textbf{GP}\\
\textbf{DATASET} &\textbf{SIZE} &1-layer               &4-layer               &1-layer               &4-layer            &1-layer           &            \\
\hline \\
Boston           &506           & 3.72$\pm$1.87        & 3.60$\pm$1.48        & 2.52$\pm$0.21        & 2.44$\pm$0.12     & 3.40$\pm$0.87    & 3.59$\pm$2.24 \\
Concrete         &1030          & 4.60$\pm$2.36        & 4.14$\pm$1.61        & 2.94$\pm$0.04        & 2.91$\pm$0.09     & 5.20$\pm$1.06    & 3.77$\pm$1.25 \\
Energy           &768           & 1.85$\pm$2.89        & 1.53$\pm$1.07        & 1.21$\pm$0.02        & 1.16$\pm$0.01     & 1.19$\pm$1.04    & 1.84$\pm$1.77 \\
Kin8nm           &8192          &-1.19$\pm$0.01        &-1.30$\pm$0.01        &-1.13$\pm$0.02        &-1.14$\pm$0.03     &-1.16$\pm$0.02    &-0.50$\pm$0.01 \\
Naval            &11934         &-5.61$\pm$0.03        &-5.45$\pm$0.12        &-4.44$\pm$0.01        &-4.48$\pm$0.00     &-4.54$\pm$0.28    &-5.98$\pm$0.00 \\
Power            &9568          & 2.88$\pm$0.03        & 2.98$\pm$0.06        & 2.80$\pm$0.02        & 2.63$\pm$0.03     & 2.90$\pm$0.04    & 6.92$\pm$2.94 \\
Protein          &45730         & 2.06$\pm$0.01        & 2.06$\pm$0.01        & 2.87$\pm$0.01        & 2.64$\pm$0.00     & 2.08$\pm$0.01    & 4.71$\pm$0.25 \\
Wine-red         &1599          & 0.89$\pm$0.04        & 0.87$\pm$0.04        & 0.92$\pm$0.03        & 0.89$\pm$0.03     & 0.94$\pm$0.04    & 1.07$\pm$0.01 \\
Yacht            &308           & 1.12$\pm$0.34        & 0.96$\pm$0.33        & 1.28$\pm$0.08        & 1.52$\pm$0.07     & 3.82$\pm$4.44    & 1.26$\pm$1.82 \\
\end{tabular}
}
\end{center}
\end{table*}

\vspace{50pt}
\section{Details on the examples}

\subsection{Toy \relu network}
\label{ssec:toy_example}

We consider the network: $f(x) = \text{\relu}(w_1 x) + \text{\relu}(w_2 x)$, where $x \in \mathbb{R}$ featuring a single hidden layer with 2 \relu nodes and fixed output layer.
The model is parameterized only through the weights $w_1, w_2 \in \mathbb{R}$.
We consider priors $w_i \sim \norm(0, 1)$, for $i \in \{1, 2\}$, and a Gaussian likelihood with variance $\sigma^2=0.05$.

In order to approximate the true posterior distribution, we have generated $200$ samples by using the Metropolis-Hastings algorithm \cite{Hastings70} featuring a Gaussian proposal with variance $0.5$.
For each one of $10$ restarts, we have discarded the first $10000$ samples, and then we have kept one sample every $5000$ steps.
As a convergence diagnostic, we report the average $\hat{R}$ statistic \cite{Gelman1992a} on the predictive models; 
the current choice of parameters has resulted in average $\hat{R}=1.00$, which is a strong indication of convergence.
In terms of the ensemble-based approach, we have generated $200$ particles from the prior distribution, on which we have applied $800$ steps of Algorithm 1 with $h=0.05$.

\subsection{Bayesian linear regression}
\label{ssec:linear_example}

We next consider the following 1-D linear regression model:
$f(x) = \wvec^\top \cos(\omega x-\pi/4)$,
where $\wvec \in \mathbb{R}^{D\times1}$ contains the weights of $D$ trigonometric features, $\omega \in \mathbb{R}^{D\times1}$ the corresponding frequencies, and the $\cos$ function is applied elementwise to its argument.
The values of $\omega$ are randomly initialized: $\omega \sim \norm(0, l^2 I_D)$, where $l$ is a user-defined lenghtscale (here: $l=1.6$ and $D=1024$).
Henceforth we consider $\omega$ to be fixed; the model is parametrized throught the linear parameter $\wvec$ only.

For likelihood $y|\wvec \sim \norm(f(x), 0.1)$ and prior $\wvec \sim \norm(0, I_D)$,
we have applied \adagrad ($400$ steps, $h=0.01$) on a population of $200$ particles for  a synthetic dataset of $64$ points.

\subsubsection*{A note on \kl divergence}

We consider the 1-D regression model of Section 5.2; we have $D$ trigonometric basis functions: $f(x) = \wvec^\top \cos(\omega x-\pi/4)$, where $\wvec \in \mathbb{R}^{D\times1}$ contains the weights of $D$ features and $\omega \in \mathbb{R}^{D\times1}$ is a vector of fixed frequencies.
For likelihood $y|\wvec \sim \norm(f(x), 0.1)$ and prior $\wvec \sim \norm(0, I_D)$, the true posterior over $\wvec$ is known to be Gaussian and it can be calculated analytically.

The first four plots of Figure \ref{fig:toy_rff_v0} depict the state of 10 particles at different optimization stages.
The black solid line represents the true posterior mean and the shaded area the 95\% of the true posterior support, which has been evaluated analytically.

The particle distribution is initialised as a Gaussian and it retains its Gaussian form over the course of optimization, as it undergoes linear transformations only.
Thus we can analytically estimate the KL divergence between the approximating distribution (by considering the empirical mean and covariance) and the true Gaussian posterior.
The rightmost subfigure in Figure \ref{fig:toy_rff_v0} depicts the evolution of the KL divergence (for 200 particles) from the true posterior; we see that curve is strictly decreasing, which is in line with the main argument of Theorem 2.

\begin{figure}[H]
	\includegraphics[width=\linewidth]{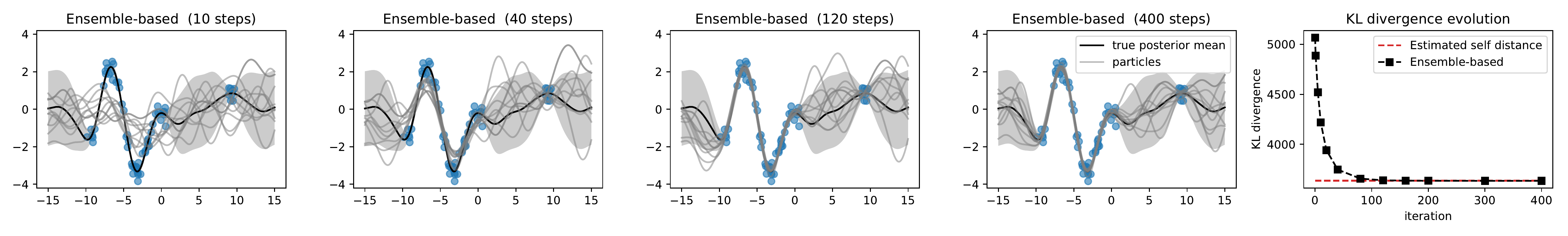}
	\vspace*{-15pt}
	\caption{Bayesian linear regression with trigonometric features -- State of 10 ensemble-based particles at different optimization stages. Rightmost figure: Evolution of \kl divergence (for 200 particles) from the true posterior.}
	\label{fig:toy_rff_v0}
\end{figure}

Notice that the \kl divergence decreases up to a level that is equal to the so-called \emph{self-distance},
which denotes the distance (\kl divergence in our case) of a sample from its actual generating distribution.
We know that $\kl[q||p] = 0$ iff $q=p$, but for a population of finite size, this value is expected to be larger than 0.
Regarding our example, the posterior is a $D$-variate Gaussian, so it has been easy to generate populations of size $200$, in order to estimate the self-distance.
The red dotted line denotes the average self-distance of 20 sampled populations of size $200$.
Convergence to the self-distance confirms that our ensemble strategy produces the true posterior for linear models, as discussed in Section 4.1 of the main paper.

\subsection{Regression \relu network}
\label{ssec:relu_example}

The final example is a 8-layer \dnn with 50 \relu nodes, featuring prior $\param \sim \norm(0, I_m)$, where $m=18000$, and likelihood $y|\param \sim \norm(f(x), 0.1)$.
As a point of reference for this example, we use samples of the Metropolis-Hastings algorithm featuring a Gaussian proposal with variance $0.01$.
We generated $200$ samples by performing $10$ restarts and having kept one sample every $20000$ steps, after discarding the first $40000$ samples.
Regarding convergence diagnostics, we have $\hat{R}=1.08$ on the predictive models.

\section{Extensions to classification}

Regarding the problem of classification, it is not obvious how to identify a data resampling strategy in the style of our approach to regression. 
We seek to create data replicates from a parametric distribution that reflects the properties of the Bernoulli likelihood.
Assume that a class label $y$ can take values in $\{0, 1\}$; if we locally fit a distribution $\mathrm{Bern}(p)$ to each $y$, the \ML parameter will be the one-sample mean i.e.\ $p_* = 0$ or $p_* = 1$.
The act of resampling from such a fitted model will deterministically produce either $0$ or $1$, eliminating thus any chance of perturbing the data.

Nevertheless, we think that classification can still benefit from bootstrapping.
Although an in-depth exploration of bootstrap-based classification is out of the scope of this work,
we demonstrate how a Gaussian-based perturbation can be extended to classification by employing local Gaussian approximations to the likelihood in the style of \cite{MiliosNIPS}.
In this way, we effectively turn classification into a regression problem in a latent space.
More specifically, each Bernoulli likelihood will be regarded as a degenerate beta distribution, to which we add a small regularisation term $\alpha_\epsilon = 0.01$.
In this way, the labels ``$0$'' will be represented as $\mathrm{Beta}(\alpha_\epsilon, 1+\alpha_\epsilon)$, while the labels ``$1$'' as $\mathrm{Beta}(1+\alpha_\epsilon, \alpha_\epsilon)$.
Classification is then treated as two parallel regression problems on the observed parameters of the beta distributions, for which we assume Gamma likelihoods\footnote{We leverage the fact that a beta-distributed random variable $x\sim \mathrm{Beta}(a,b)$ can be constructed as the ratio Gamma-distributed variables i.e.\ $x = x_a/(x_a+x_b)$, where $x_a \sim \mathrm{Gamma}(a, 1)$ and $x_b \sim \mathrm{Gamma}(b, 1)$.}.
The next step is to locally approximate the Gamma likelihood models with log-Normals by means of moment matching, which eventually translates to a regression problem with Gaussian likelihood in the log space.
Learning can be performed in this latent space and any predictions can be mapped back into probabilities by the soft-max function.

\begin{figure}[H]
	\centering
	\includegraphics[width=0.8\linewidth]{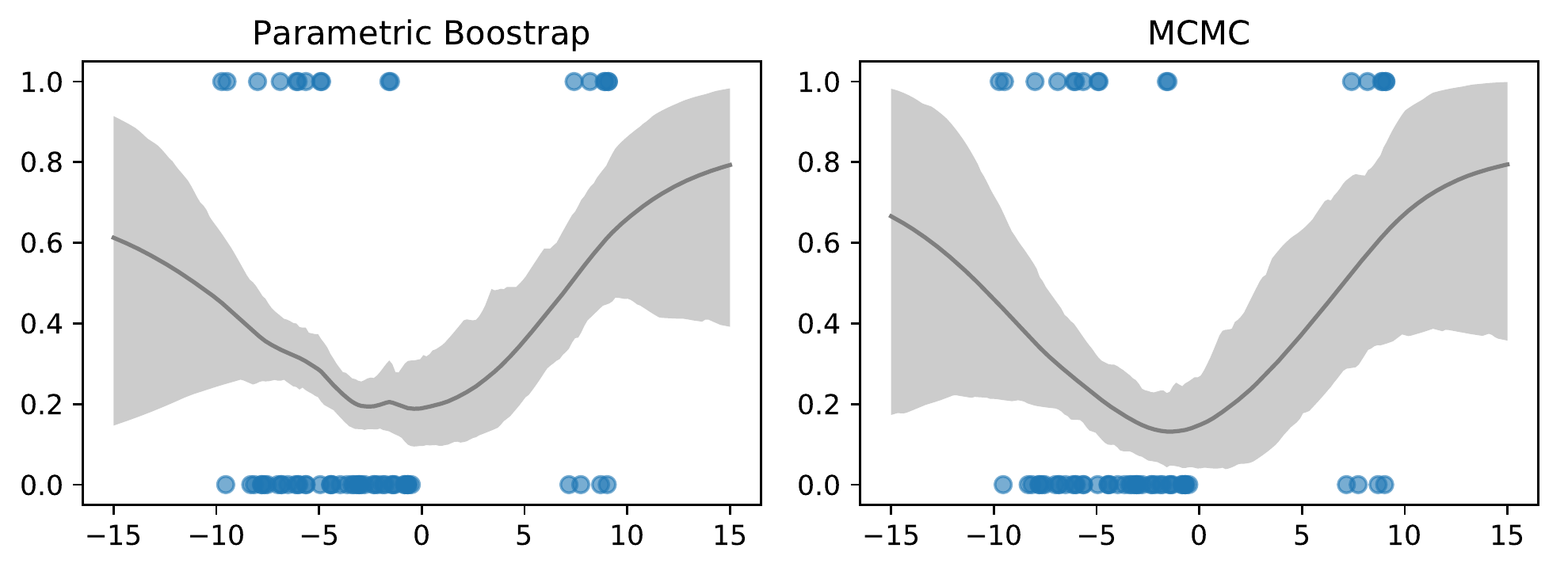}
	\caption{Application of Bootstrap to Classification. 
	}
	\label{fig:demo_classif}
\end{figure}

Figure \ref{fig:demo_classif} illustrates an example of applying parametric bootstrap to a synthetic classification problem, considering one hidden layer and \relu activations.
The shaded areas represent the 95\% support of the sampled distributions of classifiers.
The combination of bootstrap with locally approximated likelihoods on the left produces a fairly accurate approximation of the \mcmc result on the right.




\bibliographystyle{abbrvnat_nourl}
\bibliography{bibliography,filippone}


\end{document}